\DeclareMathAlphabet{\mathcal}{OMS}{cmsy}{m}{n}
\providecommand\m[1]{\ensuremath{#1}\xspace}
\renewcommand{\m}[1]{\ensuremath{#1}\xspace}
\newcommand{\trval}[1]{\m{\mathbf{#1}}}
	\newcommand{\lrule}{\leftarrow}
	\newcommand{\cause}{\stackrel{c}{\lrule}}
	\newcommand{\ltrue}{\trval{t}}
	\newcommand{\lfalse}{\trval{f}}
	\newcommand{\lunkn}{\trval{u}}
	\newcommand{\Tr}{\ltrue}
	\newcommand{\Fa}{\lfalse}
	\newcommand{\Un}{\lunkn}
	\newcommand{\struct}{\m{I}}
	\newcommand{\rules}{\m{R}}
	\NewDocumentCommand\inter{g+g}{%
	  \IfNoValueTF{#1}
	    {\struct}
	    {\m{#1^{#2}}}}
	\renewcommand{\int}{\m{\mathbb{Z}}}
	\newcommand{\leqt}{\m{\leq_t}}
	\NewDocumentCommand\subs{g+g}{%
	  \IfNoValueTF{#1}
	    {\m{/}}
	    {\m{#1/ #2}}}
\newcommand{\ouracronym}[3]{%
	\newacronym{#1}{#2}{#3}
	\expandafter\newcommand\csname #1\endcsname{\gls{#1}\xspace}%
}
	\def\ifenv#1{
	\def\@tempa{#1}%
	\def\@ttempa{#1*}%
	\ifx\@tempa\@currenvir
	\expandafter\@firstoftwo
	\else
	\expandafter\@secondoftwo
	\fi
	}
	\newcommand{\ddrule}[4]{\ensuremath{#1 \leftarrow #2 & \{#3\} & #4}}
	\newcommand{\drule}[2]{\ensuremath{#1 & \leftarrow & #2}}
	\newcommand{\darule}[4]{\ensuremath{#1 \leftarrow #2 & \{#3\} & #4}}
	\newcommand{\arule}[2]{\ensuremath{#1 \, &\leftarrow \, #2}}
	\newcommand{\LNDRule}[2]{
	\ifenv{array}
	{\drule{#1}{#2}}
	{ \ifenv{align}
		{\arule{#1}{#2}}
		{\ifenv{align*}
		{\arule{#1}{#2}}
		{ERROR: using LDRule in unsupported environment: \@currenvir}
		}
	}
	}
	\newcommand{\LDRule}[4]{
	\ifenv{array}
	{\ddrule{#1}{#2}{#3}{#4}}
	{ \ifenv{align}
		{\darule{#1}{#2}{#3}{#4}}
		{\ifenv{align*}
		{\darule{#1}{#2}{#3}{#4}}
		{ERROR: using LDRule in unsupported environment: \@currenvir}
		}
	}
	}
	\NewDocumentCommand\LRule{m+g+g+g}{%
		\IfNoValueTF{#2}%
		{#1.&}{%
		\IfNoValueTF{#3}
		{\LNDRule{#1}{#2.}}
		{\LDRule{#1}{#2.}{#3}{#4}}%
		}
	}
	\NewDocumentCommand\CLRule{m+g}{%
	\ifenv{array}
	{\cdrule{#1}{#2}}
	{ \ifenv{align}
		{\carule{#1}{#2}}
		{\ifenv{align*}
			{\carule{#1}{#2}}
			{ERROR: using CLRule in unsupported environment: \@currenvir}
		}
	}
	}
	\NewDocumentCommand\carule{m+g}{%
		\IfNoValueTF{#2}
			{\ensuremath{#1.}}
			{\ensuremath{#1 \, &\cause \, #2}}}
	\NewDocumentCommand\cdrule{m+g}{%
		\IfNoValueTF{#2}
			{\ensuremath{#1.}}
			{\ensuremath{#1 & \cause & #2}}}
	\newcommand{\algrule}[4]{
	\hbox{{#1}:}& 
	\quad #2 ~\longrightarrow~ #3 
	\hbox{~ if } #4\\
	}
	\newcommand{\AlgoRule}[4]{
	\ifenv{array}
	{\algrule{#1}{#2}{#3}{#4}}
		{ERROR: using AlgoRule in unsupported environment: \@currenvir}
	}
	\newcommand{\ignore}[1]{}
	\newcommand{\namedcomment}[3]{%
		\ifthenelse{\boolean{nocomments}}%
		{}
		{
			\ifthenelse{\boolean{commentmargin}}%
				{ {\color{#3} \marginpar{\color{#3}\sc #2}#1}  }
				{  {\color{#3} {\sc #2}: #1}  }
		}%
	}
	\newcommand{\mnamedcomment}[3]{\ifthenelse{\boolean{nocomments}}{}{{\marginpar{ \color{#3}{\sc #2}:#1}}}}
\font\uwavefont=lasyb10 scaled 700
\def\spelling{\bgroup\markoverwith{\lower3.5\p@\hbox{\uwavefont\textcolor{Red}{\char58}}}\ULon}
\def\grammar{\bgroup\markoverwith{\lower3.5\p@\hbox{\uwavefont\textcolor{LimeGreen}{\char58}}}\ULon}
\def\phrasing{\bgroup\markoverwith{\lower3.5\p@\hbox{\uwavefont\textcolor{RoyalBlue}{\char58}}}\ULon}
\newcommand\remove{\bgroup\markoverwith{\textcolor{red}{\rule[0.5ex]{2pt}{0.4pt}}}\ULon}
\newcommand\setcitation[2]{%
  \csdef{mycommoncitation#1}{#2}}
\newcommand\getcitation[1]{%
  \csuse{mycommoncitation#1}}
\newcommand\mycite[1]{%
      \ifcsname mycommoncitation#1\endcsname%
   \cite{\getcitation{#1}}%
  \else%
    \cite{#1}%
  \fi%
}	
\newcommand\mycitet[1]{%
      \ifcsname mycommoncitation#1\endcsname%
   \citet{\getcitation{#1}}%
  \else%
    \citet{#1}
  \fi%
}
\newcommand\setl[1]{\left\lbrace #1 \right\rbrace}
\newcommand\setprop[2]{\left\lbrace #1 \; \middle| \; #2 \right\rbrace}
\newcommand{\tild}{\mathord{\sim}}
\newcommand{\suppletter}{\mathcal{S}}
\newcommand{\suppopsys}[1]{\suppletter_{#1}}
\newcommand{\supp}{\suppopsys{\js}}
\newcommand{\interp}{\mathcal{I}}
\newcommand{\be}{\mathcal{B}}
\newcommand{\besub}[1]{\m{\be_{\mathrm{#1}}}}
\newcommand{\bekk}{\besub{KK}}
\newcommand{\bewf}{\besub{wf}}
\newcommand{\best}{\besub{st}}
\newcommand{\besp}{\besub{sp}}
\newcommand{\F}{\mathcal{F}}
\newcommand{\jf}{{\m{\mathcal{J}\kern-0.2em\mathcal{F}}}}
\newcommand{\jfcomplete}[1][\rules]{\left\langle \F, \Fd, #1\right\rangle}
\newcommand{\js}{{\mathcal{J}\kern-0.2em\mathcal{S}}}
\newcommand{\jscomplete}[1][\be]{\left\langle \F, \Fd, \rules, #1\right\rangle}
\newcommand{\lf}{\mathcal{L}}
\newcommand{\Fp}{{\m{\F_{+}}}}
\newcommand{\Fn}{{\m{\F_{-}}}}
\newcommand{\Fd}{{\m{\F_d}}}
\newcommand{\Fo}{{\m{\F_o}}}
\newcommand{\pathstyle}[1]{\mathbf{#1}}
\newcommand{\branch}{\m{\pathstyle{b}}}
\newcommand{\branches}{\m{B}}
\newcommand{\justifications}{\m{\mathfrak{J}}}
\newcommand{\ptr}{\m{\trval{T}}}
\newcommand{\pfa}{\m{\trval{F}}}
\newcommand{\pref}{\m{\sqsubseteq}}
\newcommand{\prefplayer}[1]{\m{\pref_{#1}}}
\newcommand{\preftr}{\m{\prefplayer{\ptr}}}
\newcommand{\preffa}{\m{\prefplayer{\pfa}}}
\newcommand{\graph}{\m{G}}
\newcommand{\game}{\m{\mathcal{G}}}
\newcommand{\states}{\m{S}}
\newcommand{\edges}{\m{E}}
\newcommand{\paths}[1]{\m{\pathtext_{#1}}}
\newcommand{\play}[1]{\m{p_{#1}}}
\newcommand{\strategies}{\m{\mathfrak{S}}}
\newcommand{\gstrategies}{\m{\strategies_g}}
\newcommand{\pstrategies}{\m{\strategies_p}}
\newcommand\myparagraph[1]{

\noindent\emph{#1}\quad}
\DeclareMathOperator{\im}{Im}
\DeclareMathOperator{\sgn}{sgn}
\DeclareMathOperator{\suppvalue}{SV}
\DeclareMathOperator{\jval}{val}
\DeclareMathOperator{\source}{source}
\DeclareMathOperator{\target}{target}
\DeclareMathOperator{\plays}{Play}
\DeclareMathOperator{\pathtext}{Path}
\title
{Exploiting Game Theory for Analysing Justifications}
\author[S. Marynissen, B. Bogaerts and M. Denecker]{SIMON MARYNISSEN$^{1,2}$, BART BOGAERTS$^2$, MARC DENECKER$^1$ \\
$^1$KU Leuven \quad $^2$Vrije Universiteit Brussel}
\newtheorem{theorem}{Theorem}[section]
\newtheorem{lemma}[theorem]{Lemma}
\newtheorem{definition}[theorem]{Definition}
\newtheorem{proposition}[theorem]{Proposition}
\newtheorem{remark}[theorem]{Remark}
\newtheorem{example}[theorem]{Example}
\newtheorem{corollary}[theorem]{Corollary}
\newcounter{numberOfStoredProofs}
\newcommand{\arxivpaper}[1]{\ifthenelse{\boolean{arxiv}}{#1}{}}
\newcommand{\tplppaper}[1]{\ifthenelse{\boolean{arxiv}}{}{#1}}
\newcommand\proofintext[1]{#1}
\renewcommand\proofintext[1]{}
\newcommand\thmwithproof[5]{%
  \thmwithproofgeneral{#1}{#2}{#3}{#4}{#5}{false}%
}
\newcommand\lemmaInAppendix[1]{%
\expandafter\newcommand\csname mystoredproof\the\value{numberOfStoredProofs} \endcsname{#1}\stepcounter{numberOfStoredProofs}}
\newcommand\thmwithproofgeneral[6]{%
  \begin{#3}\label{#1}#4%
  \end{#3}%
  \setboolean{displayproof}{#6}%
  \ifthenelse{\boolean{displayproof}}%
  {\begin{proof}#5\end{proof}}%
{ \proofintext{ \begin{proof} 	#5 \end{proof}}%
    \expandafter\newcommand\csname mystoredproof\the\value{numberOfStoredProofs} \endcsname{\noindent {\it #2 \ref{#1}}%
      \noindent #4 \begin{proof}#5\end{proof}}%
    \stepcounter{numberOfStoredProofs}%
  }%
}
\newcommand\getproof[1]{
  \ifcsname mystoredproof#1 \endcsname%
  \csname mystoredproof#1 \endcsname%
  \else%
  \fi%
}
\newcommand\proofsketch[1]{
  \ifthenelse{\boolean{displayproof}}
  {}
  {#1}
  
}
\newcommand{\proofs}{
  \foreach \n in {0,...,\value{numberOfStoredProofs}}{\getproof{\n}}
}
\newcommand\citet[1]{\citeNS{#1}\xspace}
\begin{document}
\setcounter{page}{1}
\label{firstpage}
\maketitle

\begin{abstract}
  Justification theory is a unifying semantic framework.
  While it has its roots in non-monotonic logics, it can be applied to various areas in computer science, especially in explainable reasoning; its most central concept is a justification: an explanation why a  property holds (or does not hold) in a model.
  
  In this paper, we continue the study of justification theory by means of three major contributions.
  The first is studying the relation between justification theory and game theory.
  We show that justification frameworks can be seen as a special type of games.
  The established connection provides the theoretical foundations for our next two contributions.
  The second contribution is studying under which condition two different dialects of justification theory (graphs as explanations vs trees as explanations) coincide.
  The third contribution is establishing a precise criterion of when a semantics induced by justification theory yields consistent results.
  In the past proving that such semantics were consistent took cumbersome and elaborate proofs.
  
  We show that these criteria are indeed satisfied for all common semantics of logic programming.
  
  \arxivpaper{This paper is under consideration for acceptance in Theory and Practice of Logic Programming (TPLP).}
\end{abstract}

\begin{keywords}
  justification theory, positional games, logic programming, non-monotonic logic
\end{keywords}

\section{Introduction}

Justification theory is an abstract theory to define semantics of non-monotonic logics. 
It is a powerful and versatile framework: 
First, it provides a mechanism to define new logics based on well-known principles, and to transfer results between domains. 
Second, it brings order in the zoo of logics and semantics, on the one hand by allowing a systematic comparison between multiple semantics for a single logic and on the other hand by enabling a comparison between different logics (which semantics of which logic coincides with semantics of another logic?). 
Third, it enables modular definitions of semantics.
Using so-called nesting of justification frames, efforts needed to introduce new language constructs (e.g., aggregates) are reduced.

Next to these theoretic benefits, justifications are also used in implementation of solvers.
In the unfounded set algorithm \cite{iclp/GebserKKS09}, justifications for atoms are stored (the so-called \emph{source-pointer approach} essentially maintains a justification).
\mycitet{lazygroundingASP} used justifications to learn new clauses to improve search in lazy grounding algorithms.
Additionally, justifications were used to improve parity game solvers \cite{vmcai/LapauwBD20}. 

The key semantic concept in justification theory is a \emph{justification}: an explanation why something holds (or does not hold) in a model.
Because of this, defining semantics of a logic through justification theory does not just provide benefits on the level of an analysis of its semantics, but also equips the logic with a mechanism of \emph{explanation}, thereby providing at least a partial answer to constantly increasing need 
for explainable methods in artificial intelligence \cite{ai/Miller19}, which is especially important in the light of the EU General Data Protection Regulation, article 22 of which  requires that all AI with an impact on human lives needs to be accountable.

Historically, justification theory was first defined in the doctoral thesis of \citet{DeneckerPhD93} as a framework for studying semantics of logic programs. 
In that work, a justification is a tree where the nodes are labelled with literals. 
Later, \citet{lpnmr/DeneckerBS15} developed a more general theory, aiming to also capture other knowledge representation formalisms. 
One notable difference with the early work was that justifications were no longer formalised as a tree, but as a graph. 
While it is known that for the major semantics of logic programming, these two concepts coincide, in the sense that they induce the same semantics, in general very little is known about the relation between tree-like and graph-like justifications; one of the goals of this paper is to bridge that gap.

One benefit of justification theory is that it provides a lot of freedom to create new semantics by means of so-called \emph{branch evaluations}.
However, as we show in this paper, not all branch evaluations lead to a well-defined semantics. 
Showing well-definedness can be an intricate job \cite{nmr/MarynissenPBD18} and until now no techniques have been developed to do this in a systematic way. 
We show that this issue closely relates to the coincidence of graph-like and tree-like justifications and provide relatively easy to verify criteria that guarantee that a finite justification framework is well-behaved. 
Our results build on existing work in the context of antagonistic games over graphs \cite{concur/GimbertZ05}: we establish a connection between justification theory and game theory, and exploit it to transfer existing results on antagonistic games over graphs to justification theory. 
The main contributions of this paper are as follows:

\begin{itemize}
 \item We present graph-based and tree-based justifications in a uniform way, thus enabling an in-depth study of their relationship. 
 \item We show that justifications induce games in the sense of \citet{concur/GimbertZ05}, thereby bridging a gap between non-monotonic reasoning and game theory. 
 \item Inspired by \citet{concur/GimbertZ05}, we develop two criteria for branch evaluations, namely monotonicity and selectivity, and we show for branch evaluations satisfying these conditions in \emph{finite} justification systems, graph-based justifications are well-behaved.
 While this means that our results do not apply, e.g., to infinitary semantics, for practical applications finiteness is not a major limitation.
 \item Additionally, we show that whenever graph-based justifications are well-behaved, then graph-based and tree-based justifications are equally powerful.
 \item Finally, we show that all branch evaluations corresponding to major semantics of logic programming are indeed selective and monotone.
\end{itemize}

As a consequence, with our results, proving that a branch evaluation induces well-behaved graph-based justifications is much easier: all that is needed is to show that it is monotone and selective.
This is in shrill contrast with the proofs of well-behavedness of  \citet{nmr/MarynissenPBD18} and \citet{DeneckerPhD93}, which span at least a couple of pages for each semantics.
The proof of Proposition \ref{prop:lpismonotoneandselective}, on the other hand, is much simpler and shorter.
\tplppaper{Additional proofs can be found at arXiv.}

\section{Justification systems}

%
%
%

In this section we introduce justification theory for three-valued logics.
\citet{nmr/MarynissenPBD18} used four-valued logics but the benefit of that still needs to be researched.
The formalisation we present here is the first in which  graph-like and tree-like justifications are unified in a single theory,
thereby facilitating a study of the relationship between them.
In the rest of this paper, let $\F$ be a set, referred to as a \emph{fact space},
such that $\lf = \setl{\Tr, \Fa, \Un} \subseteq \F$,
where $\Tr$, $\Fa$ and $\Un$ have the respective meaning \emph{true}, \emph{false}, and \emph{unknown}.
The elements of $\F$ are called \emph{facts}.
The set $\lf$ behaves as the three-valued logic with truth order $\Fa \leqt \Un \leqt \Tr$.
We assume that $\F$ is equipped with an involution $\tild: \F \rightarrow \F$ (i.e.\ a bijection that is its own inverse)
such that $\tild \Tr=\Fa$, $\tild \Un=\Un$, and $\tild x \neq x$ for all $x \neq \Un$.
For any fact $x$, $\tild x$ is called the \emph{complement} of $x$.
An example of a fact space is the set of literals over a propositional vocabulary $\Sigma$ extended with $\lf$ where $\tild$ maps a literal to its negation.
For any set $A$ we define $\tild A$ to be the set of elements of the form $\tild a$ for $a \in A$.
We distinguish two types of facts: \emph{defined} and \emph{open} facts.
The former are accompanied by a set of rules that determine their truth value.
The truth value of the latter is not governed by the rule system but comes from an external source or is fixed (as is the case for logical facts).

\begin{definition}
  A \emph{justification frame} $\jf$ is a tuple $\jfcomplete$ such that
  \begin{itemize}
    \item $\Fd$ is a  subset of $\F$ closed under $\tild$, i.e.\ $\tild \Fd = \Fd$; facts in $\Fd$ are called \emph{defined};
    \item no logical fact is defined: $\lf \cap \Fd = \emptyset$;
    \item $\rules \subseteq \Fd \times 2^\F$;
    \item for each $x \in \Fd$, $(x, \emptyset) \notin R$ and there is an element $(x, A) \in R$ for $\emptyset \neq A \subseteq \F$.
  \end{itemize}
\end{definition}
The set of \emph{open} facts is denoted as $\Fo:=\F\setminus\Fd$.
An element $(x, A) \in \rules$ is called a \emph{rule} with \emph{head} $x$ and \emph{body} (or \emph{case}) $A$.
The set of cases of $x$ in $\jf$ is denoted as $\jf(x)$.
Rules $(x, A) \in \rules$ are denoted as $x \gets A$ and if $A=\setl{y_1, \ldots, y_n}$, we often write $x \gets y_1, \ldots, y_n$.

Logic programming rules can easily be transfered to rules in a justification frame.
However, in logic programming, only rules for positive facts are given; never for negative facts.
Hence, in order to apply justification theory to logic programming, a mechanism for deriving rules for negative literals is needed as well.
For this, a technique called \emph{complementation} was invented; it is a generic mechanism that allows turning a set of rules for $x$ into a set of rules for $\tild x$.

Before defining complementation, we first define \emph{selection functions} for $x$.
A selection function for $x$ is a mapping $s\colon \jf(x) \rightarrow \F$ such that $s(A) \in A$ for all rules of the form $x \gets A$.
Intuitively, a selection function chooses an element from the body of each rule of $x$.
For a selection function $s$, the set $\{s(A) \mid A \in \jf(x)\}$ is denoted by $\im(s)$.
\begin{definition}
  Given a set of rules $\rules$.
  We define $\rules^*$ to be the set of elements of the form $\tild x \gets \tild Im(s)$
  for $x \in \Fd$ that has rules in $\rules$ and $s$ a selection function for $x$.
  The \emph{complementation} of $\jf$ is defined as $\jfcomplete[\rules \cup \rules^*]$.
  A justification frame $\jf$ is \emph{complementary} if it is fixed under complementation.
\end{definition}
In case of logic programming where only rules for positive facts are given, we can construct a complementary justification frame that is ``equivalent'' by first taking the complementation and then adding rules of the form $x \gets A'$ if there is a rule $x \gets A \in \rules$ with $A \subseteq A' \subseteq \F$.
If a logic program is finite, then so is this complementary justification frame.
Remark that complementarity is a natural property that is satisfied in all practical applications of justification theory.
This is because in all practical applications you start with rules for only of the polarities, and apply complementation to this set of rules.
\begin{definition}
  A \emph{directed labelled graph} is a quadruple $(N, L, \edges, \ell)$ where $N$ is a set of nodes, $L$ is a set of labels, $\edges \subseteq N \times N$ is the set of edges, and $\ell\colon N \rightarrow L$ is a function called the labelling.
  An \emph{internal} node is a node without outgoing edges.
\end{definition}

\begin{definition}
  Let $\jf=\jfcomplete$ be a justification frame.
  A \emph{graph-like justification} $J$ in $\jf$ is a directed labelled graph $(N, \Fd, \edges, \ell)$ such that
  \begin{itemize}
    \item for every internal node $n \in N$ it holds that $\ell(n) \gets \{\ell(m) \mid (n,m) \in \edges\} \in \rules$;
    \item no two nodes have the same label,~i.e., $\ell$ is injective.
  \end{itemize}
  A \emph{tree-like justification} $J$ in $\jf$ is a directed labelled graph $(N, \Fd, \edges, \ell)$ such that
  \begin{itemize}
    \item the underlying undirected graph is a forest,~i.e., is acyclic;
    \item for every internal node $n \in N$ it holds that $\ell(n) \gets \{\ell(m) \mid (n,m) \in \edges\} \in \rules$.
  \end{itemize}
\end{definition}
We write $\justifications_g(x)$ (resp.~$\justifications_t(x)$) to denote the set of graph-like (resp.~tree-like) justifications that have a node labelled $x$. Whenever we say ``justification'' it can be either a graph-like or a tree-like.

\begin{definition}
  A justification is \emph{locally complete} if it has no leaves with label in $\Fd$, and it is \emph{connected} if the underlying undirected graph is connected.
  We call $x \in \Fd$ a \emph{root} of a justification $J$ if there is a node $n$ labelled $x$ such that every node is reachable from $n$ in $J$.
\end{definition}
\begin{definition}
  Let $\jf$ be a justification frame.
  A $\jf$-\emph{branch} is either an infinite sequence in $\Fd$ or a finite non-empty sequence in $\Fd$ followed by an element in $\Fo$.
  For a justification $J$ in $\jf$, a $J$-\emph{branch} starting in $x \in \Fd$ is a path in $J$ starting in $x$ that is either infinite or ends in a leaf of $J$.
  We write $\branches_J(x)$ to denote the set of $J$-branches starting in $x$. 
\end{definition}
Not all $J$-branches are $\jf$-branches  since they can end in nodes with a defined fact as label.
However, if $J$ is locally complete, any $J$-branch is also a $\jf$-branch.

We denote a branch $\branch$ as $\branch:x_0 \rightarrow x_1 \rightarrow \cdots$ and define $\tild \branch$ as $\tild x_0 \rightarrow \tild x_1 \rightarrow \cdots$.

\begin{definition}
  A \emph{branch evaluation} $\be$ is a mapping that maps any $\jf$-branch to an element in $\F$ for all justification frames $\jf$.  A branch evaluations $\be$ is \emph{consistent} if $\be(\tild \branch) = \tild \be(\branch)$ for any branch $\branch$.
  A justification frame $\jf$ together with a branch evaluation $\be$ form a \emph{justification system} $\js$, which is presented as a quadruple $\jscomplete$. 
\end{definition}
We now define some branch evaluations that induce semantics (as shown later) corresponding to the equally named semantics of logic programs.

\begin{definition}
  The \emph{supported} (completion) branch evaluation $\besp$ maps $x_0 \rightarrow x_1 \rightarrow \cdots$ to $x_1$.
  The \emph{Kripke-Kleene} branch evaluation $\bekk$ maps finite branches to their last element and infinite branches to $\Un$.
\end{definition}
This definition is similar to the one of \citet{lpnmr/DeneckerBS15}, except that they define $\besp$ to map finite branches to their last element.
Our definition is closer to the intuitions behind completion semantics, which compares the value of rule bodies with the value of its head. 
However, the induced semantics are the same and it is not difficult to see that all good properties of $\besp$ we prove (monotonicity and selectivity) also hold for the definition of \citet{lpnmr/DeneckerBS15}.

\begin{definition}
  A \emph{(three-valued) interpretation} of $\F$ is a function $\interp:\F \rightarrow \lf$ such that $\interp(\tild x) = \tild \interp(x)$ and $\interp(\ell) = \ell$ for all $\ell \in \lf$.
  The set of interpretations of $\F$ is denoted by $\mathfrak{I}_\F$.
\end{definition}

\begin{definition}
  Let $\js=\jscomplete$ be a justification system, $\interp$ an interpretation of $\F$, and $J$ a locally complete graph-like (respectively tree-like) justification in $\js$.
  Let $x \in \Fd$ be a label of a node in $J$.
  The \emph{value} of $x \in \Fd$ by $J$ under $I$ is defined as $\jval(J, x, \interp) = \bigwedge_{\branch \in \branches_J(x)} \interp(\be(\branch))$,
  where $\bigwedge$ is the greatest lower bound with respect to $\leqt$.
  
  The \emph{graph-like supported value} of $x \in \F$ in $\js$ under $\interp$ is defined as
  \begin{equation*}
    \suppvalue_g(x, \interp) = \bigvee_{J \in \justifications_g(x)} \jval(J,x,\interp) \text{  for $x \in \Fd$  \qquad $\suppvalue_g(x, \interp)=\interp(x)$ for $x \in \Fo$.
}
  \end{equation*}
  Likewise, the \emph{tree-like supported value} of $x \in \F$ in $\js$ under $\interp$ is defined as
  \begin{equation*}
    \suppvalue_t(x, \interp) = \bigvee_{J \in \justifications_t(x)} \jval(J,x,\interp)\text{  for $x \in \Fd$  \qquad $\suppvalue_g(x, \interp)=\interp(x)$ for $x \in \Fo$.
}
  \end{equation*}
\end{definition}
 
The following proposition shows that for determining the supported value of $x$ it suffices to look at justifications that are connected and that have $x$ as a root.
This is also the reason why tree-like justifications are not called forest-like justifications.

\thmwithproof{prop:supportonlyrootedjustification}{Proposition}{proposition}{%
  For any locally complete graph-like (respectively tree-like) justification $J$ and $x\in \Fd$ a label of an internal node,
  there is a connected and locally complete graph-like (respectively tree-like) justification $J'$ such that $x$ is a root of $J'$ and $\jval(J,x,\interp) \leqt \jval(J',x,\interp)$ for all interpretations $\interp$.
}{
  Take a node $n$ labelled $x$.
  Let $J'$ be the subgraph of $J$ spanned by the nodes that are reachable from $n$.
  By definition $J'$ is connected.
  Since $J$ is localy-complete, $J$ will have no leafs with a defined fact as label.
  By construction $\branches_{J'}(x) \subseteq \branches_J(x)$, which means that $\jval(J,x,\interp) \leqt \jval(J',x,\interp)$ for all interpretations $\interp$.
}
%
%
In many logical frameworks, there is an asymmetry between positive and negative literals.
Consider for instance stable semantics of logic programs \cite{iclp/GelfondL88}.
There, the default value for atoms is false; as such, the default value for its negation is true.
Thus, the set of defined literals is divided into two parts, those that are default and those that are not (sometimes called \emph{deviant}).
This idea is generalised to justification theory in signed justification frames.
\begin{definition}
  Let $\jf$ be a justification frame.
  A \emph{sign function} on $\jf$ is a map $\sgn: \Fd \rightarrow \setl{{-}, {+}}$ such that $\sgn(x) \neq \sgn(\tild x)$ for all $x \in \Fd$.
  We denote $\Fn\coloneqq \sgn^{-1}(\setl{{-}})$ and $\Fp\coloneqq \sgn^{-1}(\setl{{+}})$.
\end{definition}
From now on, we fix a sign function on $\jf$. We say that an infinite branch has a positive (respectively negative) tail if from some point onwards all elements are in $\Fp$ (respectively $\Fn$).

\begin{definition}
  The \emph{well-founded} branch evaluation $\bewf$ maps finite branches to their last element. It maps infinite branches to $\Tr$ if they have a negative tail, to $\Fa$ if they have a positive tail and to $\Un$ otherwise.
  
  The \emph{stable} (answer set) branch evaluation $\best$ maps a branch $x_0 \rightarrow x_1 \rightarrow \cdots$ to the first element that has a different sign than $x_0$ if it exists; otherwise $\branch$ is mapped to $\bewf(\branch)$.
\end{definition}
In the next section we discuss how justification systems induce semantics. 
For the case of logic programs, the semantics induced by the well-founded branch evaluation coincides  with the well-founded semantics \cite{GelderRS91} and the one induced by the stable branch evaluation coincides with the stable semantics \cite{iclp/GelfondL88}.

\section{Research questions}


A semantics using justification theory is defined as follows:
an interpretation $\interp$ is a ``model'' according a justification system if the supported value of each fact is equal to its value in $\interp$, i.e., if $\suppvalue_g(x, \interp) = \interp(x)$ for each $x\in \Fd$. 
Given such a model, a justification with the best (largest with respect to $\leqt$) value of $x$ constitutes an explanation why $x$ has its particular value. 

In many logics, including logic programming, there is an asymmetry between the defined facts by focusing either only on the positive or only on the negative facts. 
At the level of the justification frame, this asymmetry is resolved by \emph{complementation}. 
At the semantic level,  such logics also only focus on the value assigned to facts of certain polarity (most often, positive facts, i.e., atoms). 
This induces the following question: ``a justification for $x$ with value $\Tr$ can be seen as an explanation of why $x$ is true; but which semantic structure can explain why $x$ is false?''
From the definition of supported value, it can be seen that $x$ is false if \emph{there are no justifications} with a better value for $x$.
The question that then remains is: how to show that there are no better justifications for $x$. The most obvious solution is considering a justification of $\tild x$. Indeed: intuitively, an explanation why the negation of $x$ is true should explain why $x$ is false.
However, this method implicitly assumes that $\suppvalue(\tild x,\interp) = \tild \suppvalue(x,\interp)$,
a property which was called the \emph{consistency property} by \citet{nmr/MarynissenPBD18}.
Consistency is a reasonable assumption that, unfortunately, is not always satisfied, not even for complementary justification systems.

For the most common branch evaluations in logic programming (completion, Kripke-Kleene, well-founded, and stable), consistency has been shown in the context of graph-like justifications by \citeNS{nmr/MarynissenPBD18} and in the context of tree-like justifications by \citeNS{DeneckerPhD93}. However, these proofs are often very intricate, consist of many pages and there are currently no insights about \emph{under which conditions on a branch evaluation} consistency is guaranteed. 

\smallskip

\noindent \fbox{\parbox{\dimexpr\linewidth-2\fboxsep-2\fboxrule\relax}{ 
 \textbf{Research question 1:} under which conditions on branch evaluations is the consistency property satisfied (i.e., is $\suppvalue(\tild x, \interp) = \tild \suppvalue(x,\interp)$ for each $x$ and $\interp$)? 
}}\smallskip


\ignore{
Why do we want this?
Most logic programming semantics only work with rules for atomic facts (or only the negation of atomic facts).
In those logics, we can provide an explanation why an atomic fact
is true in an interpretation but we cannot provide an explanation why it is
false.
It is false because there is no explanation that it is true.
On the other hand, justification semantics has explanations for both atomic facts and their negation.
Therefore, it is a natural property that these explanations are opposite.
This is expressed exactly by $\suppvalue(\tild x, \interp) = \tild \suppvalue(x,\interp)$, which we call the \emph{consistency} or \emph{well-definedness} of justification semantics.

In \citeNS{nmr/MarynissenPBD18}, the authors proved that this is the case for completion, Kripke-Kleene, well-founded, and stable branch evaluations for graph-like justifications.
In \cite{DeneckerPhD93}, the same result is proved for tree-like justifications.
}

Next to guaranteeing the ability to explain falsity of atoms similar to truth, consistency brings another advantage: in case of consistency of graph-like (respectively tree-like) justifications, a justification system induces an operator (the support operator)
\begin{align*}
 \supp^g:&\mathfrak{I}_\F \rightarrow \mathfrak{I}_\F: \interp \mapsto \suppvalue_g(\cdot, \interp), \text{ respectively}\\
 \supp^t:&\mathfrak{I}_\F \rightarrow \mathfrak{I}_\F: \interp \mapsto \suppvalue_t(\cdot, \interp).
\end{align*}
In case of non-consistency, the function $\suppvalue_g(\cdot, \interp)$ does not necessarily define an interpretation. 
Having such an operator defined allows bridging a gap towards operator-based studies of semantics, such as done, e.g., in  approximation fixpoint theory \mycite{AFT}.


\ignore{
This can be made more rigorously as follows.

For any interpretation $\interp$ of $\F$ we have the mappings $\suppvalue_g(\cdot, \interp):\F \rightarrow \lf$ and $\suppvalue_t(\cdot, \interp):\F \rightarrow \lf$.
In order for $\interp$ to be a model, we should have that $\suppvalue(\cdot,\interp) = \interp$.

Define the \emph{graph-like support operator}
\begin{equation*}
\supp^g:\mathfrak{I}_\F \rightarrow (\F \rightarrow \lf): \interp \mapsto \suppvalue_g(\cdot, \interp).
\end{equation*}
and the \emph{tree-like support operator}
\begin{equation*}
\supp^t:\mathfrak{I}_\F \rightarrow (\F \rightarrow \lf): \interp \mapsto \suppvalue_t(\cdot, \interp).
\end{equation*}

Most of the time, we want that the ranges of $\supp^g$ and $\supp^t$ are actually $\mathfrak{I}_\F$.
So that each interpretation is actually mapped to an interpretation.
Therefore, $\suppvalue(\cdot,\interp)$ should be an interpretation,~i.e., $\suppvalue(\tild x,\interp) = \tild \suppvalue(x,\interp)$.
}

We now define a branch evaluation that illustrates that consistency is not always guaranteed. 

\begin{definition}\label{def:ex}
  For a signed justification frame $\jf = \jfcomplete$, we define the branch evaluation $\besub{ex}$ as follows:
  \begin{itemize}
    \item $\besub{ex}(x_0 \rightarrow x_1 \rightarrow \cdots\rightarrow x_n)=x_n$;
    \item $\besub{ex}(x_0 \rightarrow x_1 \rightarrow \cdots)=\Fa$ if $\exists i_0 \in \mathbb{N}:\forall i,j > i_0: x_i \in \Fp$ and if $x_i=x_j$, then $x_{i+1}=x_{j+1}$;
    \item $\besub{ex}(x_0 \rightarrow x_1 \rightarrow \cdots)=\Tr$ if $\exists i_0 \in \mathbb{N}:\forall i,j > i_0: x_i \in \Fn$ and if $x_i=x_j$, then $x_{i+1}=x_{j+1}$; 
    \item $\besub{ex}(x_0 \rightarrow x_1 \rightarrow \cdots)=\Un$ otherwise.
  \end{itemize}
\end{definition}
\begin{example}\label{ex:niko}
  Take $\Fd=\setl{a,\tild a,b,\tild b,c,\tild c}$ and $\Fo=\lf$, $\Fp=\setl{a,b,c}$, and $\Fn=\setl{\tild a, \tild b, \tild c}$.
  The set of rules is the complementation of $\setl{a\gets b;a\gets c;b\gets a;c\gets a}$.
  Under the branch evaluation $\besub{ex}$, we have that $\suppvalue_g(a, \interp) = \Fa$, while $\suppvalue_g(\tild a, \interp) = \Un$ for any interpretation $\interp$ of $\F$, i.e., the consistency property is not satisfied here.
  This can easily be checked by noting that the only connected graph-like justifications with $a$ or $\tild a$ as root are the following
  \[
    \begin{tikzpicture}[transform shape]
    \node (a) at (0, 0) {$a$};
    \node (b) at (1, 0) {$b$};
    \tikzstyle{EdgeStyle}=[bend left, style={->}]
    \Edge(a)(b)
    \Edge(b)(a)
    \end{tikzpicture}
    \hskip 5em
    \begin{tikzpicture}[transform shape]
    \node (a) at (0, 0) {$a$};
    \node (c) at (1, 0) {$c$};
    \tikzstyle{EdgeStyle}=[bend left, style={->}]
    \Edge(a)(c)
    \Edge(c)(a)
    \end{tikzpicture}
    \hskip 5em
    \begin{tikzpicture}[transform shape]
    \node (a) at (1.3, 0) {$\tild a$};
    \node (b) at (0, 0) {$\tild b$};
    \node (c) at (2.6, 0) {$\tild c$};
    \tikzstyle{EdgeStyle}=[bend left, style={->}]
    \Edge(a)(b)
    \Edge(a)(c)
    \Edge(b)(a)
    \Edge(c)(a)
    \end{tikzpicture}
\]
  Additionally, it can also be seen that $\suppvalue_t(a, \interp) = \Un$ and $\suppvalue_t(\tild a, \interp) = \Un$ for each interpretation $\interp$,
  and thus that the graph-like supported value can differ from the tree-like supported value.
  A tree-like justification $J$ with $\jval(J,a,\interp) = \Un$ is given below:
  \[
    \begin{tikzpicture}[transform shape]
    \node (a) at (0, 0) {$a$};
    \node (b) at (1, 0) {$b$};
    \node (a2) at (2, 0) {$a$};
    \node (c) at (3, 0) {$c$};
    \node (a3) at (4, 0) {$a$};
    \node (b2) at (5, 0) {$b$};
    \node (a4) at (6, 0) {$a$};
    \node (c2) at (7,0) {$c$};
    \node (d) at (8,0) {$\cdots$};
    
    \tikzstyle{EdgeStyle}=[style={->}]
    \Edge(a)(b)
    \Edge(b)(a2)
    \Edge(a2)(c)
    \Edge(c)(a3)
    \Edge(a3)(b2)
    \Edge(b2)(a4)
    \Edge(a4)(c2)
    \Edge(c2)(d)
    \end{tikzpicture}
\]
  A tree-like justification $J$ with $\jval(J,\tild a, \interp) = \Un$ is given by the tree unfolding of the graph-like justification for $\tild a$ with value $\Un$.
\end{example}
This example immediately brings us to the second research question, namely the relation between graph-like and tree-like justifications. 

\smallskip

\noindent \fbox{\parbox{\dimexpr\linewidth-2\fboxsep-2\fboxrule\relax}{ 
 \textbf{Research question 2:} under which conditions on branch evaluations do the graph-like and the tree-like supported value coincide (i.e., is $\suppvalue_t(x, \interp) = \suppvalue_g(x,\interp)$ for each $x$ and $\interp$)? 
}}\smallskip
\ignore{
This illustrates that whenever you define a new branch evaluation, you have to show that the induced semantics are well-defined.
In this paper, we give sufficient conditions that imply the consistency of the justification semantics.

The example also provides an instance where tree-like justifications are more powerful than graph-like justifications, since $\suppvalue_g(a,\interp) <_t \suppvalue_t(a,\interp)$.
This brings us to the second issue we discuss: in what case will graph-like justifications render the same results as tree-like justifications.
}
Resolving this question is important from a practical perspective, e.g., for justification-based algorithms.
Indeed, tree-like justifications  tend to be infinitely large when recursion is present.
Graph-like justifications on the other hand, can easily be maintained. 
It is easy to see that tree-like justifications are at least as powerful as graph-like justifications.
\thmwithproof{prop:treebetterthangraph}{Proposition}{proposition}{%
  For any $x \in \Fd$ and interpretation $\interp$, it holds that $\suppvalue_g(x, \interp) \leqt \suppvalue_t(x, \interp)$.
}{
  Any graph-like justification $J$ can be unrolled into a tree-like justification $J'$ with the same branches.
  Therefore, $\jval(J,x,\interp) = \jval(J',x,\interp)$, which proves the result.
}
The fact that the branch evaluation from Definition \ref{def:ex} serves as a counterexample for both research questions is not a coincidence: these questions are closely related. 
We will show (Theorem \ref{th:consistentgraphgivesequaltrees}) that under very unrestrictive conditions, if graph-like justifications are consistent, then tree-like justifications give the same result as graph-like justifications, and thus are also consistent.

As a technical means to answer these questions, we use game theory: we will show that justification systems induce a game in the sense of \cite{concur/GimbertZ05} and that coincidence of graph-like and tree-like justifications is inherently tied to the existence of optimal positional strategies in the induced game. 
Before doing so, we introduce some preliminaries on games.

\section{Games}
We now define antagonistic games over graphs \cite{concur/GimbertZ05}. Our formalisation slightly differs from the one of \citeANP{concur/GimbertZ05}, but the differences are minor. 
%
Intuitively a game  unfolds as follows:
There are two players $\ptr$ and $\pfa$, with opposite interests.
Let $\graph$ be a graph in which each vertex is owned either by $\ptr$, or by $\pfa$.
Initially, a stone is placed on some vertex in $\graph$.
At each step, the player owning the vertex with the stone moves the stone along an outgoing edge.
The players interact in this way ad infinitum or until the stone is on a vertex without outgoing edges.

\begin{definition}
  A \emph{game graph} is a quadruple $\graph=(\states, \states_\ptr, \states_\pfa, \edges)$
  where $\states_\ptr$ and $\states_\pfa$ form a partition of the set of states $\states$, and $\edges \subseteq \states \times \states$.
  For a transition $e = (s,t) \in \edges$, the states $s$ and $t$ are respectively called the source and target of $e$ (denoted $\source(e) $ and $\target(e)$).
  For a state $s \in \states$, $s\edges$ is the set of outgoing edges from $s$.
  A \emph{path} in $\graph$ is a non-empty finite or infinite sequence of states $p=s_0s_1s_2\ldots$ such that for all $i\geq 0$, $(s_i,s_{i+1}) \in E$.
  We define $\source(p)$ to be equal to $s_0$.
  If $p$ is finite, then $\target(p)$ is the last state of $p$.
  If a path consists of a single state $s$, we call it \emph{empty}. 
  We denote the empty path in $s$ by $\lambda_s$.
  The set of finite paths in $\graph$ (including the empty paths) is denoted by $\paths{\graph}$.
\end{definition}
%
\begin{definition}
  Let $\graph$ be a game graph.
  A \emph{play} in $\graph$ is either an infinite path in $\graph$ or a finite path in $\graph$ that ends in a state without outgoing edges.
  The set of plays in $\graph$ is denoted by $\plays_\graph$.
\end{definition}
In its most general form, games of this form do not have a winner, but instead, each player has a preference relation indicating which plays they prefer over others.

\begin{definition}
  Let $\graph$ be a game graph.
  A \emph{preference relation} for a player $P$ is a total preorder (i.e., a reflexive and transitive relation such that for all plays $p$ and $q$, $p \pref_P q$ or $q \pref_P p$ holds) over $\plays_\graph$.
%

\noindent
  A \emph{game} is a tuple $\game=(\graph, \preftr)$, where $\graph$ is a game graph and $\preftr$ is a preference relation for $\ptr$.
  We define $\preffa$ to be the inverse relation of $\preftr$.
\end{definition}
Intuitively, if $p \pref_P q$, then the player $P$ prefers the play $q$ at least as much as $p$.
If both $p \pref_P q$ and $q \pref_P p$ hold, then we say that $p$ and $q$ are equivalent with respect to $\pref_P$.

When playing, players usually follow some strategy. In this paper we are interested in two types of strategies: \emph{general} and \emph{positional} strategies. Intuitively, the former can take into account the entire play history to determine a move, while the latter only depends on the current state. 

\begin{definition}
  Let $\graph = (\states, \states_\ptr, \states_\pfa, \edges)$ be a game graph.
  A \emph{general strategy} for a player $P$ in $\graph$ is a function $\sigma_P: \setprop{p \in \paths{\graph}}{ \target(p) \in \states_P} \rightarrow \edges$ such that $\sigma_P(p) \in \target(p)\edges$.
  The set of general strategies for player $P$ is denoted by $\gstrategies(P)$.
  A \emph{positional (or memoryless) strategy} for a player $P$ in $\graph$ is a mapping $\sigma_P: \states_P \rightarrow \edges$ such that $\sigma_P(s) \in s\edges$ for all $s \in \states_P$.
  The set of positional strategies for $P$ is denoted by $\pstrategies(P)$.
\end{definition}
We can embed $\pstrategies(P)$ into $\gstrategies(P)$ by mapping a positional strategy $\sigma$ to a general strategy that maps $s_0\ldots s_n$ to $\sigma(s_n)$.
Slightly abusing notation, we thus have that $\pstrategies(P) \subseteq \gstrategies(P)$.

\begin{definition}
  Let $\graph = (\states, \states_\ptr, \states_\pfa, \edges)$ be a game graph.
  A finite or infinite path $p=s_0s_1s_2\ldots$ in $\graph$ is \emph{consistent} with a general strategy $\sigma_P$  for a player $P$ if
  $\sigma_P(s_0\ldots s_i) = (s_i,s_{i+1})$ whenever $s_i \in \states_P$.
  It is \emph{consistent} with a positional strategy $\sigma_P$ for player $P$ if
   $\sigma_P(s_i) = (s_i,s_{i+1})$ whenever $s_i \in \states_P$.
\end{definition}
Given a state $s$ and strategies $\sigma$ and $\tau$ for players $\ptr$ and $\pfa$, there exists a unique play in $\graph$ starting in $s$ consistent with both $\sigma$ and $\tau$.
We denote this play by $\play{\graph}(s, \sigma, \tau)$.

\begin{definition}
  Let $\sigma$ be a general strategy for a player $P$.
  The \emph{play tree} for $\sigma$ in $x$ is the tree with nodes labelled by states whose maximally long branches correspond to the plays starting in $x$ consistent with $\sigma$. 
  \\
%
  Let $\sigma_p$ be a positional strategy for a player $P$.
  The \emph{play graph} for $\sigma_p$ in $x$ is the subgraph of the game graph consisting of all edges occurring in plays starting in $x$ consistent with $\sigma_p$.
\end{definition}

Under the assumption that $\ptr$ and $\pfa$ are rational agents, trying to maximize the value of the resulting play in their preference relation ($\preftr$ respectively $\preffa$), some strategies will not be followed. 
So-called \emph{Nash equilibria} are often used as ``solutions'' to games;
intuitively, they are pairs of strategies from which neither player wants to unilaterally deviate.
In antagonistic games over graphs this is defined as follows.

\begin{definition}
  Let $\graph = (\states, \states_\ptr, \states_\pfa, \edges)$ be a game graph and let $\sigma^*$ and $\tau^*$ be general strategies for respectively $\ptr$ and $\pfa$.
  The pair $(\sigma^*, \tau^*)$ is \emph{optimal} if for all states $s \in \states$ and all general strategies $\sigma$ and $\tau$ for respectively $\ptr$ and $\pfa$ it holds that
  $\play{\graph}(s, \sigma, \tau^*) \preftr \play{\graph}(s, \sigma^*, \tau^*) \preftr \play{\graph}(s, \sigma^*, \tau)$.
\end{definition}
Intuitively, in an optimal pair of strategies, no player can benefit by changing only their strategy.
In this paper, we will be particularly interested in the question whether an optimal pair of \emph{positional} strategies (i.e., an optimal pair of strategies such that $\sigma^*$ and
$\tau^*$ are positional) exists.

\section{Embedding justification theory in game theory}

We now show how to derive an antagonistic game from a justification system. 
We first introduce a construction and then show how strategies in the constructed game correspond to justifications of the original system, which will serve as the basis for proving the consistency property.
The following serves as a running example for this section.
\begin{example}\label{ex:leading}
  Let $\jf = \jfcomplete$ with $\Fd = \setl{p,q,\tild p, \tild q}$, $\Fo=\lf \cup \setl{r,\tild r}$, and $\rules$ the complementation of $\setl{p \gets \tild q; q \gets \tild p; p \gets r}$.
\end{example}

\paragraph{Games associated to justifications}
Let $\jf$ be a justification frame.
For any $x \in \Fd$ and rule $x \gets A \in \rules$, we introduce new symbols $r_{x \gets A}$, which we call rule symbols.
In the game associated with a justification system, $\ptr$ owns the defined facts and $\pfa$ owns the rule symbols.

\begin{definition}
  The game graph $\graph_\jf=(\states, \states_\ptr, \states_\pfa, \edges)$ corresponding to the justification frame $\jf$ is defined by $\states_\ptr = \F$, $\states_\pfa = \setprop{r_{x\gets A}}{x \in \Fd, x \gets A \in \rules}$, $\states = \states_\ptr \cup \states_\pfa$, and for any $x \in \Fd$, $x \gets A \in \rules$, and $y \in A$ we have edges $(x, r_{x \gets A})$ and $(r_{x \gets A},y)$.
\end{definition}
\begin{example}
  For the justification frame from Example \ref{ex:leading}, the game graph (without isolated nodes) is visualized below.
   \[
    \begin{tikzpicture}[transform shape]
    \node (p) at (1.5, 1) {$p$};
    \node (rp) at (1.5, 0) {$r_{p \gets \tild q}$};
    \node (rp2) at (3, 1) {$r_{p \gets r}$};
    \node (r) at (3, 0) {$r$};
    \node (nq) at (0, 0) {$\tild q$};
    \node (rnq) at (0, 1) {$r_{\tild q \gets p}$};
    
    \node (np) at (6, 1) {$\tild p$};
    \node (rnp) at (6, 0) {$r_{\tild p \gets q, \tild r}$};
    \node (nr) at (4.5, 0) {$\tild r$};
    \node (q) at (7.5, 0) {$q$};
    \node (rq) at (7.5, 1) {$r_{q \gets \tild p}$};
    
    \tikzstyle{EdgeStyle}=[style={->}]
    \Edge(p)(rp)
    \Edge(p)(rp2)
    \Edge(q)(rq)
    \Edge(np)(rnp)
    \Edge(nq)(rnq)
    \Edge(rp)(nq)
    \Edge(rp2)(r)
    \Edge(rnp)(q)
    \Edge(rnp)(nr)
    \Edge(rnq)(p)
    \Edge(rq)(np)
    \end{tikzpicture}
\]
\end{example}
In a game graph corresponding to a justification frame, a strategy for $\ptr$ chooses a rule for every defined fact and a strategy for $\pfa$ chooses for each rule an element from its body. Intuitively, this means that the true player will be the one choosing a justification, while the false player chooses a branch in that justification.

Any play in this graph corresponds to a $\jf$-branch as follows:
Let $s_0s_1\ldots$ be a play in $\graph_\jf$.
By removing the rule symbols, we get a sequence in $\F$, which is a $\jf$-branch.
Therefore, any play $p \in \plays_{\graph_\jf}$ corresponds to a $\jf$-branch $\branch_p$.
These branches are used to define a preference relation for the player $\ptr$ to obtain a game.

\begin{definition}
  The \emph{justification game} $\game_{\js, \interp}$ corresponding to a justification system $\js$ and an interpretation $\interp$ of $\F$ is the antagonistic game $(\graph_\jf, \pref)$ such that for all $p, q \in \plays_\graph$, $p \pref q$ if and only if $\interp(\be(\branch_p)) \leqt \interp(\be(\branch_q))$.
\end{definition}
We now formally show how a justification system relates to the induced justification game on the semantic level. Intuitively: strategies correspond to justifications and play values dictate the supported value.
We start by the correspondence of strategies for $\ptr$ and justifications.

\paragraph{Strategies for \ptr are justifications}
Intuitively, a strategy for $\ptr$ chooses a rule for every defined fact, which is exactly what a justification does as well. The following propositions make the relation between strategies for \ptr and justifications precise.

\thmwithproof{prop:ptrstrategyisjustification}{Proposition}{proposition}{%
  Let $\sigma$ be a positional (respectively general) strategy for $\ptr$ in the game graph $\graph_\jf$.
  Take $x \in \Fd$.
  The play graph (respectively play tree) of $\sigma$ in $x$ where all the rule symbols are filtered out\footnote{By construction, there are no edges between rule symbols.
  Therefore, filtering, is removing all edges to and from rule symbols and adding edges $(y,z)$ if in the original graph there are edges $(y,r)$ and $(r,z)$ for some rule symbol $r$.}
  is a connected, locally complete graph-like (respectively tree-like) justification with $x$ as root in the justification frame $\jf$.
  We use the symbol $J_\sigma(x)$ for this justification.
}{%
  It is clear that is connected by the definition of play graph/tree.
  Every leaf node is not defined, because plays never stop in defined nodes; hence it is locally complete.
  We take an arbitrary internal node $n$ with label $y$ in $J_\sigma(x)$.
  This means that $y \in \Fd$.
  We prove that the set of labels of the children of $n$ are a case of $y$.
  By definition, $y \in \states_\ptr$, hence there is only one outgoing transition from $n$ in the play graph.
  This transition goes to a node $m$ with a rule symbol $r_{y \gets A} \in \states_\pfa$ as label.
  This means that for all $a \in A$ that there is an edge in the play graph from $m$ to a node $n_a$ labelled $a$.
  This means for all $a \in A$ that $(n,n_a)$ is an edge in $J_\sigma(x)$ and that these are the only edges starting in $n$, which proves that $J_\sigma(x)$ is a justification.
  Since the play graph/tree has a node labelled $x$ as root, so does $J_\sigma(x)$.
}
\begin{example}
  Let $\jf$ be the justification frame from Example \ref{ex:leading} and let $\sigma$ be a positional strategy for $\ptr$ that maps $p$ to $r_{p \gets \tild q}$.
  The justifications $J_\sigma(p)$ and $J_\sigma(\tild p)$ are depicted below from left to right.
  \[
    \begin{tikzpicture}[transform shape]
      \node (p) at (0, 0) {$p$};
      \node (nq) at (1.2, 0) {$\tild q$};
      \tikzstyle{EdgeStyle}=[bend left, style={->}]
      \Edge(p)(nq)
      \Edge(nq)(p)
    \end{tikzpicture}
    \qquad\qquad\qquad
    \begin{tikzpicture}[transform shape]
      \node (np) at (1.2, 0) {$\tild p$};
      \node (q) at (0, 0) {$q$};
      \node (nr) at (2.4, 0) {$\tild r$};
      \tikzstyle{EdgeStyle}=[bend left, style={->}]
      \Edge(np)(q)
      \Edge(q)(np)
      \tikzstyle{EdgeStyle}=[style={->}]
      \Edge(np)(nr)
    \end{tikzpicture}
\]
\end{example}

\lemmaInAppendix{
\begin{lemma}\label{lem:ptrstrategyvalue}
  Let $\sigma$ be a positional (respectively general) strategy for $\ptr$.
  For any interpretation $\interp$ and $x \in \Fd$, it holds that $\jval(J_\sigma(x),x,\interp) = \bigwedge_{\tau \in \gstrategies(\pfa)} u(x,\sigma, \tau)$.
\end{lemma}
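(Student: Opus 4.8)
The plan is to reduce the lemma to a single set equality between $\jf$-branches and plays. First I would invoke Proposition~\ref{prop:ptrstrategyisjustification} to recall that $J_\sigma(x)$ is a connected, locally complete graph-like (resp.\ tree-like) justification rooted at $x$, so that every element of $\branches_{J_\sigma(x)}(x)$ is a $\jf$-branch and hence $\be$ is defined on it. Writing $p_\tau \coloneqq \play{\graph_\jf}(x,\sigma,\tau)$, so that $u(x,\sigma,\tau)=\interp(\be(\branch_{p_\tau}))$ is by definition the value of that play, both $\jval(J_\sigma(x),x,\interp)$ and $\bigwedge_{\tau \in \gstrategies(\pfa)} u(x,\sigma,\tau)$ are greatest lower bounds of $\interp\circ\be$ over, respectively, the sets $\branches_{J_\sigma(x)}(x)$ and $\setprop{\branch_{p_\tau}}{\tau \in \gstrategies(\pfa)}$. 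Hence it suffices to prove these two sets of branches coincide, and I would run the positional and general cases in one go, replacing ``play graph'' by ``play tree'' and ``graph-like'' by ``tree-like'' throughout.

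For the inclusion from plays to branches, I would take any $\tau$ and note that $p_\tau$ starts in $x$ and is consistent with $\sigma$, so all its edges lie in the play graph (resp.\ it is a maximal branch of the play tree) of $\sigma$ in $x$, which is exactly $J_\sigma(x)$ after filtering the rule symbols. Since every defined fact and every rule symbol of $\graph_\jf$ has an outgoing edge -- rules have nonempty bodies and every defined fact has a rule -- a play can only terminate in an open fact; therefore $p_\tau$ has the shape $x_0 \to r_0 \to x_1 \to r_1 \to \cdots$ with $x_0=x$, $x_i \in \Fd$ for all but a possible last index, and, if finite, ending in some $x_n \in \Fo$. Deleting the $r_i$ yields a path in $J_\sigma(x)$ from $x$ that is infinite or ends in a leaf, i.e.\ an element of $\branches_{J_\sigma(x)}(x)$, and this path is precisely $\branch_{p_\tau}$. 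For the converse inclusion, I would start from a branch $\branch\colon x_0 \to x_1 \to \cdots$ of $J_\sigma(x)$ with $x_0=x$, reinsert between consecutive $x_i,x_{i+1}$ the rule symbol $r_i$ prescribed by $\sigma$ (so $(x_i,r_i)$ is $\sigma$'s move at the relevant position and $(r_i,x_{i+1}) \in \edges$), obtaining a maximal path $\hat p$ in the play graph (resp.\ play tree) of $\sigma$, hence a play consistent with $\sigma$. I then define $\tau \in \gstrategies(\pfa)$ to follow $\hat p$, i.e.\ $\tau(x_0 r_0 x_1 \cdots x_i r_i)=(r_i,x_{i+1})$, and to pick arbitrary (existing) outgoing edges on all other histories ending in a rule symbol; this is well defined because the prefixes of $\hat p$ ending in a rule symbol have pairwise distinct lengths. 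Uniqueness of the play consistent with $\sigma$ and $\tau$ then gives $\hat p = p_\tau$, so $\branch_{p_\tau}=\branch$.

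I expect the only real obstacle to be the bookkeeping around this correspondence, in particular two points: verifying that plays bottom out exactly at the leaves of $J_\sigma(x)$ (the terminal-state analysis above), and recognising that for the converse inclusion a history-dependent strategy for $\pfa$ is genuinely needed -- in a cyclic graph-like $J_\sigma(x)$ the same rule symbol may have to be followed to different successors on different visits, which is precisely why the lemma quantifies over $\gstrategies(\pfa)$ and not $\pstrategies(\pfa)$. Once the set equality is established, the lemma follows immediately, since the greatest lower bound of $\interp\circ\be$ depends only on the underlying set of branches.
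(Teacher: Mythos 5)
Your proof is correct and follows exactly the branch--play correspondence that the paper relies on for this lemma (the paper itself states it without a written-out proof, only sketching the bijection between $J_\sigma(x)$-branches and plays consistent with $\sigma$ in the surrounding discussion). Your reduction to a set equality, the terminal-state analysis showing plays end only in open facts, and the observation that $\pfa$ must be allowed history-dependent strategies to realise non-positional branches of a cyclic graph-like justification are precisely the intended argument, spelled out in more detail than the paper gives.
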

}
\lemmaInAppendix{
\begin{remark}
  Even for a positional strategy $\sigma$, it is the greatest lower bound over all general strategies $\tau$.
\end{remark}
}

\thmwithproof{prop:ptrstrategyjustificationsareenough}{Proposition}{proposition}{%
  Any connected, locally complete graph-like (respectively tree-like) justification $J$ with root $x$ is equal to some $J_\sigma(x)$ for a positional (respectively general) strategy $\sigma$ for $\ptr$.
}{
  We define a strategy $\sigma$ for $\ptr$.
  Let $p$ be a finite path in $\graph_\jf$ such that $\source(p) = x$ and $\target(p) \in \states_\ptr$.
  Then by filtering out the rule symbols, we get a path $p^*$ in $\jf$.
  It suffices to define $\sigma(p)$ only for paths $p$ such that $p^*$ is a path in $J$.
  Since $J$ is a justification, the set of labels of the children of $\target(p)$ form a case $A$ for the label $y$ of $\target(p)$.
  We define $\sigma(p) = r_{y \gets A}$.
  It is clear that $J_\sigma(x)$ is equal to $J$.
}
The correspondence of justifications and strategies for \ptr raises the question what the role of \pfa is. Intuitively, \ptr  chooses for each node in the justification a set of children corresponding to a rule and \pfa then chooses for each such node a single child. 
This means that \pfa actually chooses a branch in the justification determined by a strategy of \ptr. 
An important remark is that branches in graph-like justifications need not be positional (they can contain both $y \rightarrow u$ and $y \rightarrow v$ for $u \neq v$; in this case, $u$ and $v$ are elements of the same case).
Taking these considerations into account, we can rephrase the supported value of a node in terms of strategies, where \ptr has to play positionally to obtain graph-like justifications but \pfa is always allowed to use general strategies. 
\ignore{

This theorem shows that strategies for $\ptr$ induce justifications.
In order to calculate supported values using these type of justifications,
we need to know how to calculate the value of these justifications, and if the value of these justifications are enough to calculate the supported value.
We start with the former.
Let $\sigma$ be a (positional) strategy for $\ptr$. 
The value of $J_\sigma(x)$ depends on the value of its branches.
A justification can be seen as a set of rules.
A branch in a justification starts in a fact $x$.
Then the next element will be an element of the body of the rule of $x$.
This repeats ad infinitum, or until an open fact is reached.
Choosing an element of the body of a rule is exactly what strategies for $\pfa$ do.
Therefore, every strategy $\tau$ for $\pfa$ corresponds to a $J_\sigma(x)$-branch.
More precisely, the play starting in $x$ consistent with both $\sigma$ and $\tau$ is a sequence of elements in $\F$ and rule symbols.
When removing the rule symbols, a $\jf$-branch is left.
In this case, it is also a $J_\sigma(x)$-branch since the starting play is a path in the play graph (tree) of $\sigma$.
Moreover, every $J_\sigma(x)$-branch can be found this way.
This means that the value of $J_\sigma(x)$ is equal to the greatest lower bound of the value of the plays consistent with $\sigma$ and starting in $x$.
Note that if $\sigma$ is positional, positional strategies for $\pfa$ are not sufficient to cover all branches of $J_\sigma(x)$.
This is because graph-like justifications can contain ``non-positional'' branches,~i.e., branches that contain $y \rightarrow u$ and $y \rightarrow v$ for $u \neq v$.


As a consequence, the supported value can be computed using play values of strategies.}
To simplify notation, we introduce a utility function defined as follows: $u(x, \sigma, \tau) \coloneqq \interp(\be(\branch_{\play{\graph}(x, \sigma, \tau)}))$.
\thmwithproof{th:positivesupportedvalue}{Theorem}{theorem}{%
  For any $x \in \Fd$ and interpretation $\interp$, the following holds
  \begin{equation*}
    \suppvalue_t(x,\interp) = \bigvee_{\sigma \in \gstrategies(\ptr)} \bigwedge_{\tau \in \gstrategies(\pfa)} u(x,\sigma, \tau) \qquad\text{and}\qquad \suppvalue_g(x,\interp) = \bigvee_{\sigma \in \pstrategies(\ptr)} \bigwedge_{\tau \in \gstrategies(\pfa)} u(x,\sigma, \tau)
  \end{equation*}
}{%
  By Proposition \ref{prop:supportonlyrootedjustification} the supported value of $x$ is equal to the least upper bound of the value of connected, locally complete justifications with root $x$. By Proposition \ref{prop:ptrstrategyjustificationsareenough}, these are exactly the justifications of the form $J_\sigma(x)$ with $\sigma$ a strategy for $\ptr$.
  Using Lemma \ref{lem:ptrstrategyvalue} we then conclude the proof.
}


\paragraph{Strategies for \pfa are also justifications}
We have seen how strategies for $\ptr$ correspond to justifications and that strategies for $\pfa$ correspond to branches in those justifications.
We used this information to calculate the supported value using play values.
However, in this section we show that also strategies for $\pfa$ correspond to justifications, but for negation of facts.
This is a crucial step towards proving the consistency property, because we can then relate the supported values of $x$ and $\tild x$.
In order to do so, our justification frames needs to be complementary, which, as mentioned above, is a condition satisfied in all practical applications of justification theory.

Let $\tau$ be a strategy for $\pfa$.
This means that $\tau$ chooses for every rule symbol $y \gets A$ an element from $A$.
This corresponds to a selection function $s$ for $y$.
Using complementarity, this selection function provides a rule for $\tild y$: $\tild y \gets \tild \im(s)$.
Therefore, the strategy $\tau$ indirectly chooses a rule for every $\tild y$, which thus again induces a justification. 

\thmwithproof{prop:pfastrategyisjustification}{Proposition}{proposition}{%
  Let $\jf$ be a complementary justification frame and $\tau$ a positional (respectively general) strategy for $\pfa$ in the game graph $\graph_\jf$ and take $x \in \Fd$.
  The play graph (respectively play tree) of $\tau$ in $x$ where all the rule symbols are filtered out and all node's labels are inverted ($y$ replaced by $\tild y$, remark $\tild(\tild y)=y$), is a connected, locally complete graph-like (respectively tree-like) justification with $\tild x$ as root in the justification frame $\jf$.
  We write $J_\tau(x)$ for this justification.
}{
  By construction $J_\tau(x)$ is connected and locally-complete.
  Take an internal node $n$ of $J_\tau(x)$ with label $y$, hence $y \in \Fd \cap \states_\ptr$.
  This node corresponds to a node $n^*$ with label $\tild y$.
  This means that in the play tree of $\tau$ there is an edge from $n^*$ to a node $m_{\tild y \gets A}^*$ with label $r_{\tild y \gets A}$ for all rules of the form $\tild y \gets A$.
  Since $r_{\tild y \gets A} \in \states_\pfa$, in the play tree of $\tau$ there is exactly one outgoing edge from $m_{\tild y \gets A}^*$ to a node labelled $a \in A$.
  This defines a selection function $s$ for $\tild y$.
  Therefore, in $J_\tau(x)$ the set of labels of the children of $n$ is exactly $\tild\im(s)$.
  Since $\jf$ is a complementary justification frame, it means that $J_\tau(x)$ is a justification.
  That $J_\tau(x)$ has $\tild x$ as root follows directly from the fact that $x$ is a root of the play graph/tree.
}

\begin{example}
  Let $\jf$ be the justification frame from Example \ref{ex:leading}.
  Let $\tau$ be the positional strategy for $\pfa$ that maps $r_{\tild p \gets q,\tild r}$ to $\tild r$.
  The justifications $J_\tau(p)$ and $J_\tau(\tild p)$ are given below from left to right.
\[
    \begin{tikzpicture}[transform shape]
      \node (np) at (1.2, 0) {$\tild p$};
      \node (q) at (0, 0) {$q$};
      \node (nr) at (2.5, 0) {$\tild r$};
      \tikzstyle{EdgeStyle}=[bend left, style={->}]
      \Edge(np)(q)
      \Edge(q)(np)
      \tikzstyle{EdgeStyle}=[style={->}]
      \Edge(np)(nr)
    \end{tikzpicture}
    \qquad\qquad\qquad
    \begin{tikzpicture}[transform shape]
      \node (p) at (0, 0) {$p$};
      \node (r) at (1, 0) {$r$};
      \tikzstyle{EdgeStyle}=[style={->}]
      \Edge(p)(r)
    \end{tikzpicture}
\]
\end{example}
\lemmaInAppendix{%
\begin{lemma}
  Let $\jf$ be a complementary justification frame and $\tau$ a positional (respectively general) strategy for $\pfa$, and let $\sigma$ be a general strategy for $\ptr$.
  The branch corresponding to the play $\play{\graph}(x, \sigma, \tau)$ is the negation of a branch in $J_\tau(x)$.
\end{lemma}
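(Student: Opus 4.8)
The plan is to unwind the definitions of the justification game, of the play graph/tree, and of the filtering-plus-label-inversion operation that produces $J_\tau(x)$, and then simply read off the play $p := \play{\graph}(x,\sigma,\tau)$ inside $J_\tau(x)$. By definition $p$ is the unique play in $\graph_\jf$ starting in $x$ consistent with both $\sigma$ and $\tau$; in particular it is consistent with $\tau$, so $p$ is a maximal path through the play graph of $\tau$ in $x$ (respectively a branch of the play tree of $\tau$ in $x$). Write $p = x_0 \to r_0 \to x_1 \to r_1 \to \cdots$ with $x_0 = x$, where the $x_i \in \F$ are facts and the $r_i$ are rule symbols; facts and rule symbols necessarily alternate, since in $\graph_\jf$ every edge goes from a fact to a rule symbol or from a rule symbol to a fact. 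Filtering out the rule symbols yields the $\jf$-branch $\branch_p : x_0 \to x_1 \to x_2 \to \cdots$, and since the filtering used to build $J_\tau(x)$ adds an edge $(y,z)$ exactly when $(y,r),(r,z)$ were edges for some rule symbol $r$, it collapses each segment $x_i \to r_i \to x_{i+1}$ into $x_i \to x_{i+1}$; hence $x_0 \to x_1 \to x_2 \to \cdots$ is a path in the filtered play graph/tree underlying $J_\tau(x)$.

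Next I would use that $J_\tau(x)$ is precisely this filtered play graph/tree with every label $y$ replaced by $\tild y$. Reading the same path through the labelling of $J_\tau(x)$ therefore gives $\tild x_0 \to \tild x_1 \to \tild x_2 \to \cdots$, i.e.\ exactly $\tild\branch_p$. It then remains to check this is a $J_\tau(x)$-branch starting in the root. It starts in $\tild x_0 = \tild x$, which is the root of $J_\tau(x)$ by Proposition \ref{prop:pfastrategyisjustification}. If $p$ is infinite then so are $\branch_p$ and $\tild\branch_p$. If $p$ is finite it ends, being a play, in a state without outgoing edges in $\graph_\jf$; every rule symbol $r_{y\gets A}$ has an outgoing edge because its body $A$ is nonempty, so the last state is a fact $x_n$, and a fact has an outgoing edge iff it is defined and has a rule, which every defined fact does; hence $x_n \in \Fo$ and $x_n$ is a leaf of the play graph/tree, so the node labelled $\tild x_n$ is a leaf of $J_\tau(x)$. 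In both cases $\tild\branch_p$ is a $J_\tau(x)$-branch, and since $\tild$ is an involution, $\branch_p = \tild(\tild\branch_p)$ is the negation of a branch in $J_\tau(x)$, as claimed.

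The positional and general cases are handled by the identical argument, the only difference being whether one reads "play graph" or "play tree"; I would remark this once rather than duplicating the proof. I do not anticipate a genuine obstacle: the only point requiring care is the bookkeeping across the three layers (game graph, filtered graph, relabelled justification) and the verification that a finite play terminates at an \emph{open} fact so that its image really is a leaf of $J_\tau(x)$ — everything else is immediate from the definitions.
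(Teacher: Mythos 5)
Your proof is correct, and it is exactly the definitional unwinding the paper intends: the paper actually states this lemma without proof, treating it as immediate from the constructions of the play graph/tree and of $J_\tau(x)$. Your only added content — checking the fact/rule-symbol alternation and that a finite play must terminate in an open fact, so that its image really ends in a leaf of $J_\tau(x)$ — is precisely the bookkeeping the paper leaves implicit, and it is carried out correctly.
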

}%
\lemmaInAppendix{%
\begin{lemma}
  Let $\jf$ be a complementary justification frame and $\tau$ a positional (respectively general) strategy for $\pfa$.
  Every branch in $J_\tau(x)$ starting with $\tild x$ is the negation of a branch corresponding to the play $\play{\graph}(x, \sigma,\tau)$ for some general strategy $\sigma$ for $\ptr$.
\end{lemma}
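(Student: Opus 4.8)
The plan is to start from an arbitrary branch $\branch$ of $J_\tau(x)$ rooted at $\tild x$, read off the sequence of rules of $J_\tau(x)$ that it traverses, and use the complementation recipe behind $J_\tau(x)$ to turn this data into a play of $\graph_\jf$ consistent with $\tau$; a general strategy $\sigma$ for $\ptr$ realising exactly that play is then built by hand.

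Concretely, I would first recall the structure of $J_\tau(x)$ from Proposition~\ref{prop:pfastrategyisjustification}: every internal node of $J_\tau(x)$ carries a label $\tild z$ with $z \in \Fd$, obtained by negating a node labelled $z$ in the rule-symbol-filtered play graph (tree) of $\tau$, and its unique rule is $\tild z \gets \tild\im(s_z)$, where $s_z$ is the selection function for $z$ determined by $\tau$, namely $s_z(A) = \tau(r_{z \gets A})$ for each $z \gets A \in \rules$. Writing $\branch: w_0 \to w_1 \to \cdots$ with $w_0 = \tild x$, so $\tild w_0 = x$, each internal $w_i = \tild z_i$ then forces the existence of a rule $z_i \gets A_i \in \rules$ with $\tild w_{i+1} = s_{z_i}(A_i) = \tau(r_{z_i \gets A_i})$; in particular $\tild w_{i+1} \in A_i$. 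From these I assemble the interleaved sequence $p = \tild w_0,\, r_{\tild w_0 \gets A_0},\, \tild w_1,\, r_{\tild w_1 \gets A_1},\, \tild w_2, \ldots$, of the same length as $\branch$ (if $\branch$ ends in an open leaf $w_n$, then $\tild w_n$ is also open, hence a sink of $\graph_\jf$, and $p$ stops there). The identities above say precisely that $(\tild w_i, r_{\tild w_i \gets A_i})$ and $(r_{\tild w_i \gets A_i}, \tild w_{i+1})$ are edges of $\graph_\jf$, so $p$ is a path, and that at each rule symbol the successor chosen in $p$ is exactly $\tau(r_{\tild w_i \gets A_i})$, so $p$ is a play consistent with $\tau$. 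Deleting the rule symbols from $p$ recovers $\branch_p = \tild\branch$, hence $\branch = \tild\branch_p$.

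Finally I would exhibit $\sigma \in \gstrategies(\ptr)$ with $\play{\graph}(x, \sigma, \tau) = p$: on each proper prefix $q$ of $p$ ending in a defined fact, set $\sigma(q)$ equal to the edge that $p$ takes out of $\target(q)$ — this is well defined since distinct prefixes of a fixed sequence differ — and on every other $\ptr$-history let $\sigma$ pick an arbitrary outgoing edge. Then $p$ is consistent with $\sigma$ as well, and since the play starting in $x$ consistent with a given pair of strategies is unique, $p = \play{\graph}(x,\sigma,\tau)$, which yields $\branch = \tild\branch_{\play{\graph}(x,\sigma,\tau)}$, as required.

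I expect the only delicate points to be bookkeeping ones: keeping the three layers synchronised — the branch of $J_\tau(x)$ over $\tild$-labels, the underlying path in $\tau$'s play graph over plain labels, and the interleaved play in $\graph_\jf$ carrying rule symbols — and matching, step by step, the complementation rule $\tild z \gets \tild\im(s_z)$ used in $J_\tau(x)$ against the single $\pfa$-move $\tau(r_{z \gets A})$. Two minor things need care: the finite-branch case (the play ends in an open fact, which is a $\ptr$-owned sink, so no value of $\sigma$ is needed there), and the observation that $\sigma$ must be allowed to be a \emph{general} strategy, since a defined fact may recur along $\branch$ and then be continued along a different child, so the choice genuinely depends on the history. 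None of this is conceptually hard.
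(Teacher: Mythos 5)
Your argument is correct, and it is exactly the intended unwinding of the construction behind Proposition \ref{prop:pfastrategyisjustification}: the paper states this lemma without proof, treating it as immediate from that construction, and your write-up supplies precisely the missing bookkeeping (re-interleaving the rule symbols $r_{\tild w_i \gets A_i}$ to recover a play consistent with $\tau$, then defining $\sigma$ prefix-by-prefix along that play). You also correctly identify the two genuinely delicate points — that $\sigma$ must be a \emph{general} strategy because a recurring defined fact may be continued differently at different occurrences, and that a finite branch ends in an open fact, which is a sink of $\graph_\jf$ — so nothing is missing.
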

}%
\lemmaInAppendix{%
\begin{lemma}\label{lem:pfastrategyvalue}
  Let $\jf$ be a complementary justification frame and $\tau$ a positional (respectively general) strategy for $\pfa$.
  For any $x \in \Fd$ and interpretation $\interp$, it holds that $\jval(J_\tau(x), \tild x, \interp) = \bigwedge_{\sigma \in \gstrategies(\ptr)} \interp(\be(\tild \branch_{\play{\graph}(x, \sigma, \tau)}))$.
\end{lemma}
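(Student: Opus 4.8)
The plan is to mirror the proof of Lemma~\ref{lem:ptrstrategyvalue} on the $\pfa$-side: reduce the claim to a set identity between the branches of $J_\tau(x)$ and certain negated play branches, and then simply unfold the definition of $\jval$.

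First I would observe that, by Proposition~\ref{prop:pfastrategyisjustification}, $J_\tau(x)$ is a connected, locally complete graph-like (respectively tree-like) justification with $\tild x$ as root. Local completeness is precisely what makes $\jval(J_\tau(x), \tild x, \interp)$ well-defined, and it also guarantees that every $J_\tau(x)$-branch is a genuine $\jf$-branch, so that $\be$ may be applied to it. Unwinding the definition of $\jval$ then gives $\jval(J_\tau(x), \tild x, \interp) = \bigwedge_{\branch \in \branches_{J_\tau(x)}(\tild x)} \interp(\be(\branch))$.

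Next I would prove the identity $\branches_{J_\tau(x)}(\tild x) = \setprop{\tild \branch_{\play{\graph}(x, \sigma, \tau)}}{\sigma \in \gstrategies(\ptr)}$. The inclusion $\supseteq$ is the first of the two lemmas immediately preceding this one: for any general $\ptr$-strategy $\sigma$, the branch of $\play{\graph}(x, \sigma, \tau)$ is the negation of a branch of $J_\tau(x)$, and since $\tild$ is an involution and the play starts at $x$, its negation $\tild \branch_{\play{\graph}(x, \sigma, \tau)}$ is a $J_\tau(x)$-branch starting at $\tild x$. The inclusion $\subseteq$ is the second of those lemmas: every $J_\tau(x)$-branch starting at $\tild x$ equals $\tild \branch_{\play{\graph}(x, \sigma, \tau)}$ for some general $\ptr$-strategy $\sigma$. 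As in the remark following Lemma~\ref{lem:ptrstrategyvalue}, it is essential that $\sigma$ ranges over \emph{all} general strategies of $\ptr$: when $\tau$ is positional, $J_\tau(x)$ is graph-like and can contain non-positional branches (using both $y \rightarrow u$ and $y \rightarrow v$ with $u \neq v$), which no positional $\ptr$-strategy could realize.

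Finally, since a greatest lower bound depends only on the underlying set and not on how it is indexed, substituting this identity into the expression for $\jval$ yields $\jval(J_\tau(x), \tild x, \interp) = \bigwedge_{\sigma \in \gstrategies(\ptr)} \interp(\be(\tild \branch_{\play{\graph}(x, \sigma, \tau)}))$, which is the claim. I expect no genuine obstacle: all the substance is carried by the two preceding lemmas, and what remains is bookkeeping. The only points needing care are checking that $J_\tau(x)$ is locally complete so that $\jval$ makes sense at all, and keeping the quantifier ranges straight --- positional or general $\tau$ on the left, but always general $\sigma$ on the right --- mirroring the asymmetry already present in Lemma~\ref{lem:ptrstrategyvalue}.
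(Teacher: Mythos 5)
Your proof is correct and follows exactly the route the paper intends: the paper states this lemma without an explicit proof, relying on the two preceding branch-correspondence lemmas, and your argument (local completeness via Proposition~\ref{prop:pfastrategyisjustification}, the set identity between $\branches_{J_\tau(x)}(\tild x)$ and the negated play branches, then unfolding the definition of $\jval$) is precisely the intended bookkeeping.
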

}%
\lemmaInAppendix{%
\begin{lemma}\label{lem:pfastrategyexactvalue}
  Let $\jf$ be a complementary justification frame and $\be$ a consistent branch evaluation and $\tau$ a positional (respectively general) strategy for $\pfa$.
  For any $x \in \Fd$ and interpretation $\interp$, it holds that $\jval(J_\tau(x), \tild x, \interp) = \tild \bigvee_{\sigma \in \gstrategies(\ptr)} u(x, \sigma, \tau)$.
\end{lemma}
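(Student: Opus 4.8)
The plan is to build directly on Lemma~\ref{lem:pfastrategyvalue} and simply push the complement operator $\tild$ outward, step by step, until it sits in front of the least upper bound. Lemma~\ref{lem:pfastrategyvalue} already gives, for a positional (respectively general) strategy $\tau$ for $\pfa$ in a complementary frame $\jf$,
\[
  \jval(J_\tau(x), \tild x, \interp) \;=\; \bigwedge_{\sigma \in \gstrategies(\ptr)} \interp\bigl(\be(\tild \branch_{\play{\graph}(x, \sigma, \tau)})\bigr),
\]
so all that remains is to rewrite the right-hand side as $\tild \bigvee_{\sigma \in \gstrategies(\ptr)} u(x, \sigma, \tau)$.

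First I would invoke the hypothesis that $\be$ is consistent, i.e.\ $\be(\tild \branch) = \tild \be(\branch)$ for every branch $\branch$; this is precisely the extra assumption not present in the preceding lemmas, and it turns $\be(\tild \branch_{\play{\graph}(x, \sigma, \tau)})$ into $\tild\, \be(\branch_{\play{\graph}(x, \sigma, \tau)})$. Next I would use that $\interp$ is an interpretation, so $\interp(\tild y) = \tild \interp(y)$, which yields $\interp\bigl(\be(\tild \branch_{\play{\graph}(x, \sigma, \tau)})\bigr) = \tild\, \interp\bigl(\be(\branch_{\play{\graph}(x, \sigma, \tau)})\bigr) = \tild\, u(x, \sigma, \tau)$, the last equality being the definition of the utility function $u$. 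Thus the right-hand side becomes $\bigwedge_{\sigma \in \gstrategies(\ptr)} \tild\, u(x, \sigma, \tau)$.

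The final step is the De Morgan identity in the three-valued lattice $(\lf, \leqt)$: for any family $(a_i)$ of elements of $\lf$ one has $\bigwedge_i \tild a_i = \tild \bigvee_i a_i$. This holds because $\tild$ restricted to $\lf$ (with $\tild\Tr = \Fa$, $\tild\Un = \Un$, $\tild\Fa = \Tr$) is an order-reversing involution, hence a dual automorphism of the finite chain $\Fa \leqt \Un \leqt \Tr$, and dual automorphisms interchange meets and joins. Applying it with $a_\sigma = u(x, \sigma, \tau)$ converts $\bigwedge_{\sigma} \tild\, u(x, \sigma, \tau)$ into $\tild \bigvee_{\sigma} u(x, \sigma, \tau)$, which is the claimed equality. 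I do not expect a genuine obstacle here: apart from quoting Lemma~\ref{lem:pfastrategyvalue}, the argument is a short chain of substitutions using complementarity (so that $J_\tau(x)$ is defined), consistency of $\be$, the defining property of interpretations, and De Morgan. The only point deserving an explicit sentence is that $\tild$ commutes with the bounds, and this is immediate since $\lf$ has only three elements.
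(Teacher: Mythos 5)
Your proposal is correct and follows essentially the same route as the paper's own proof: apply Lemma~\ref{lem:pfastrategyvalue}, use consistency of $\be$ (together with the fact that $\interp$ commutes with $\tild$) to rewrite each term as $\tild\, u(x,\sigma,\tau)$, and finish with the identity $\bigwedge_\sigma \tild\, u(x,\sigma,\tau) = \tild \bigvee_\sigma u(x,\sigma,\tau)$. Your version merely spells out the intermediate substitutions and the order-reversal argument that the paper leaves implicit.
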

\begin{proof}
  By consistency of $\be$, it follows that $\interp(\be(\tild \branch_{\play{\graph}(x, \sigma, \tau)}))=\tild u(x,\sigma, \tau)$. The result follows by Lemma \ref{lem:pfastrategyvalue} and by noting that $\tild \bigvee_{\sigma \in \gstrategies(\ptr)} u(x, \sigma, \tau) = \bigwedge_{\sigma \in \gstrategies(\ptr)} \tild u(x, \sigma, \tau)$.
\end{proof}
}\thmwithproof{prop:pfastrategyjustificationsareenough}{Proposition}{proposition}{%
  Let $\jf$ be a complementary justification frame.
  Any connected, locally complete graph-like (respectively tree-like) justification with $\tild x$ as root is equal to $J_\tau(x)$ for some positional (respectively general) strategy $\tau$ for $\pfa$.
}{%
  The proof is completely similar to the proof of Proposition \ref{prop:ptrstrategyjustificationsareenough}.
}
\ignore{
To calculate the value of the justification $J_\tau(x)$ in $\tild x$, we need to know the value of its branches.
Let $\sigma$ be a general strategy for $\ptr$.
The play consistent with $\sigma$ and $\tau$ is a path in the play graph (tree) of $\tau$.
This means that filtering out the rule symbols from this play constructs a $\jf$-branch that is the negation of a $J_\tau(x)$-branch.
Moreover, every $J_\tau(x)$-branch is the negation of such a branch constructed from a play consistent with $\tau$ and some general strategy $\sigma$ for $\ptr$.
Therefore, the value of $J_\tau(x)$ is completely determined by the values of plays consistent with $\tau$.
To determine the value of negation of branches, we will need that our branch evaluation $\be$ is consistent.
Recall that $\be$ is consistent if $\be(\tild \branch) = \tild \be(\branch)$ for all branches $\branch$.

We now know how to calculate the value of $\tild x$ in the justification $J_\tau(x)$.
The following proposition states that to determine the supported value of $\tild x$ it suffices to look at the values of justifications of the form $J_\tau(x)$ for a strategy $\tau$ for $\pfa$.

Consequently, the supported value of $\tild x$ can be computed using play values of strategies.
This is formalised in the following theorem.
}
Completely analogously to \cref{th:positivesupportedvalue}, we thus obtain a method to compute the supported value of the negation of a fact in terms of strategies.

\thmwithproof{th:negativesupportedvalue}{Theorem}{theorem}{%
  Let $\jf$ be a complementary justification frame and $\be$ a consistent branch evaluation.
  For any $x \in \Fd$ and interpretation $\interp$, the following holds
  \begin{equation*}
    \suppvalue_t(\tild x,\interp) = \tild \bigwedge_{\tau \in \gstrategies(\pfa)} \bigvee_{\sigma \in \gstrategies(\ptr)} u(x,\sigma, \tau) \quad \text{and} \quad \suppvalue_g(\tild x,\interp) = \tild \bigwedge_{\tau \in \pstrategies(\pfa)} \bigvee_{\sigma \in \gstrategies(\ptr)} u(x,\sigma, \tau)
  \end{equation*}
}{%
  By Proposition \ref{prop:supportonlyrootedjustification} the supported value of $\tild x$ is equal to the least upper bound of the value of connected, locally complete justifications with $\tild x$ as root.
  By Proposition \ref{prop:pfastrategyjustificationsareenough}, these are exactly the justifications of the form $J_\tau(x)$ with $\tau$ a strategy for $\pfa$.
  Using Lemma \ref{lem:pfastrategyexactvalue} and by noting that $\bigvee_{\tau \in \strategies(\pfa)} \tild \bigvee_{\sigma \in \strategies_g(\ptr)} u(x,\sigma,\tau) = \tild \bigwedge_{\tau \in \strategies(\pfa)} \bigvee_{\sigma \in \strategies_g(\ptr)} u(x,\sigma,\tau)$ we then conclude the proof.
}


\paragraph{Consequences for consistency of justification systems}
\cref{th:positivesupportedvalue} and \cref{th:negativesupportedvalue} relate supported values of $x$ and $\tild x$ to strategies. This provides a partial answer to both research questions as shown in the following two results.

\thmwithproof{prop:consistentbepartialconsistentsupportvalue}{Proposition}{proposition}{%
  Let $\jf$ be a complementary justification frame.
  If $\be$ is consistent, then for any $x \in \Fd$ and interpretation $\interp$ the following holds:
  \begin{equation*}
    \suppvalue_t(x,\interp) \leqt \tild \suppvalue_t(\tild x, \interp)   \qquad\text{and}\qquad   \suppvalue_g(x,\interp) \leqt \tild \suppvalue_g(\tild x, \interp)
  \end{equation*}
}{%
  For the first point, by Theorem \ref{th:positivesupportedvalue} and \ref{th:negativesupportedvalue}, it suffices to prove that $\bigvee_{\sigma \in \gstrategies(\ptr)} \bigwedge_{\tau \in \gstrategies(\pfa)} u(x, \sigma, \tau) \leqt \bigwedge_{\tau \in \gstrategies(\pfa)} \bigvee_{\sigma \in \gstrategies(\ptr)} u(x, \sigma, \tau)$.
  This is a well-known identity.
  
  By Proposition \ref{prop:treebetterthangraph}, we know that $\suppvalue_g(x, \interp) \leqt \suppvalue_t(x, \interp)$.
  Then it also holds that $\suppvalue_g(\tild x, \interp) \leqt \suppvalue_t(\tild x, \interp)$,
  which is the same as $\tild \suppvalue_t(\tild x, \interp) \leqt \tild \suppvalue_g(\tild x, \interp)$.
  By using the first point we get the following chain of inequalities
  \begin{equation*}
  \suppvalue_g(x, \interp) \leqt \suppvalue_t(x, \interp) \leqt \tild \suppvalue_t(\tild x, \interp) \leqt \suppvalue_g(\tild x, \interp),
  \end{equation*}
  which ends the proof.
}
This is one inequality of the consistency.
In the next section, the other direction is researched.


\begin{theorem}\label{th:consistentgraphgivesequaltrees}
  Let $\jf$ be a complementary justification frame.
  If $\be$ is consistent and graph-like justifications are consistent, then tree-like and graph-like justifications are equally powerful.
\end{theorem}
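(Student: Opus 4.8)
The plan is to establish the two inequalities $\suppvalue_g(x,\interp) \leqt \suppvalue_t(x,\interp)$ and $\suppvalue_t(x,\interp) \leqt \suppvalue_g(x,\interp)$ for every $x \in \Fd$ and every interpretation $\interp$; for $x \in \Fo$ both supported values equal $\interp(x)$, so there is nothing to prove there. The first inequality is precisely Proposition~\ref{prop:treebetterthangraph}, so all the work lies in the converse.

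For the converse I would chain three ingredients. First, since $\be$ is consistent, Proposition~\ref{prop:consistentbepartialconsistentsupportvalue} gives $\suppvalue_t(x,\interp) \leqt \tild \suppvalue_t(\tild x,\interp)$. Second, Proposition~\ref{prop:treebetterthangraph} applied to $\tild x$ gives $\suppvalue_g(\tild x,\interp) \leqt \suppvalue_t(\tild x,\interp)$; because $\tild$ is an order-reversing involution on $\lf$ (it swaps $\Fa$ and $\Tr$ and fixes $\Un$, while $\Fa \leqt \Un \leqt \Tr$), applying $\tild$ reverses this to $\tild \suppvalue_t(\tild x,\interp) \leqt \tild \suppvalue_g(\tild x,\interp)$. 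Third, the hypothesis that graph-like justifications are consistent, instantiated at $\tild x$, reads $\suppvalue_g(\tild x,\interp) = \tild \suppvalue_g(x,\interp)$, hence $\tild \suppvalue_g(\tild x,\interp) = \suppvalue_g(x,\interp)$. Putting the pieces together:
\[
  \suppvalue_t(x,\interp) \;\leqt\; \tild \suppvalue_t(\tild x,\interp) \;\leqt\; \tild \suppvalue_g(\tild x,\interp) \;=\; \suppvalue_g(x,\interp),
\]
which is the missing inequality; together with Proposition~\ref{prop:treebetterthangraph} this yields $\suppvalue_t(x,\interp) = \suppvalue_g(x,\interp)$, i.e.\ the two dialects are equally powerful.

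I do not expect a genuine obstacle here: the theorem has essentially been front-loaded into Proposition~\ref{prop:consistentbepartialconsistentsupportvalue} (which in turn rests on Theorems~\ref{th:positivesupportedvalue} and~\ref{th:negativesupportedvalue} and the minimax inequality $\bigvee_\sigma \bigwedge_\tau u \leqt \bigwedge_\tau \bigvee_\sigma u$). The only point needing care is the bookkeeping with $\tild$: one must use that it reverses $\leqt$, and that ``graph-like justifications are consistent'' is quantified over all facts, so it may be instantiated at $\tild x$ as well as at $x$. As a closing remark, the same chain also shows that tree-like justifications are then consistent, since $\suppvalue_t(\tild x,\interp) = \suppvalue_g(\tild x,\interp) = \tild \suppvalue_g(x,\interp) = \tild \suppvalue_t(x,\interp)$.
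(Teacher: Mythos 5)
Your proof is correct and follows essentially the same route as the paper's: the paper writes the single chain $\suppvalue_g(x,\interp) \leqt \suppvalue_t(x,\interp) \leqt \tild \suppvalue_t(\tild x, \interp) \leqt \tild \suppvalue_g(\tild x, \interp)$ from Propositions~\ref{prop:treebetterthangraph} and~\ref{prop:consistentbepartialconsistentsupportvalue} and observes that graph-consistency forces it to collapse into equalities, which is exactly your argument with the bookkeeping made explicit. Your closing remark that tree-like justifications are then also consistent is a correct (and worthwhile) observation that the paper does not state here.
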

\begin{proof}
  By using Propositions \ref{prop:treebetterthangraph} and \ref{prop:consistentbepartialconsistentsupportvalue} we get the following chain of inequalities:
  \begin{equation*}
  \suppvalue_g(x,\interp) \leqt \suppvalue_t(x,\interp) \leqt \tild \suppvalue_t(\tild x, \interp) \leqt \tild \suppvalue_g(\tild x, \interp).
  \end{equation*}
  By assumption, this chain collapses to only equalities; hence tree-like and graph-like justifications are equally powerful.
\end{proof}
\citet{nmr/MarynissenPBD18} showed that graph-like justifications are consistent for completion, Kripke-Kleene, stable, and well-founded semantics.
Therefore, we automatically have that graph-like and tree-like justification are equally powerful for those branch evaluations.

\section{Consistency of justification systems}

In the previous section we related the supported value under graph-like and tree-like to play values. 
In this section we first show that whenever an optimal pair of strategies exists, the supported value equals the optimal play value. Next we provide sufficient conditions for the existence of optimal pairs of positional strategies.
%
%
\lemmaInAppendix{
\begin{proposition}\label{prop:optimalismaximin}
  If a pair $(\sigma^*, \tau^*)$ of general strategies is optimal,
  then
  \begin{equation*}
  u(x, \sigma^*, \tau^*) = \bigvee_{\sigma \in \gstrategies(\ptr)} \bigwedge_{\tau \in \gstrategies(\pfa)} u(x, \sigma, \tau) = \bigwedge_{\tau \in \gstrategies(\pfa)} \bigvee_{\sigma \in \gstrategies(\ptr)} u(x,\sigma, \tau),
  \end{equation*}
  where $\bigvee$ and $\bigwedge$ are respectively the least upper bound and the greatest lower bound with respect to the order $\leqt$.
  If $(\sigma^*, \tau^*)$ is also positional, then
  \begin{equation*}
  u(x, \sigma^*, \tau^*) = \bigvee_{\sigma \in \pstrategies(\ptr)} \bigwedge_{\tau \in \gstrategies(\pfa)} u(x, \sigma, \tau) = \bigwedge_{\tau \in \pstrategies(\pfa)} \bigvee_{\sigma \in \gstrategies(\ptr)} u(x,\sigma, \tau).
  \end{equation*}
\end{proposition}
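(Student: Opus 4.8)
The plan is to translate the definition of an optimal pair into two pointwise inequalities on the utility and then squeeze the two iterated bounds between them. Recall that in the justification game $\game_{\js,\interp}$ we have $p \preftr q$ exactly when $\interp(\be(\branch_p)) \leqt \interp(\be(\branch_q))$, so the map $u(x,\sigma,\tau) = \interp(\be(\branch_{\play{\graph}(x,\sigma,\tau)}))$ is monotone in $\preftr$. Consequently, optimality of $(\sigma^*,\tau^*)$ says precisely that for every $x \in \Fd$ and all $\sigma \in \gstrategies(\ptr)$, $\tau \in \gstrategies(\pfa)$ one has $u(x,\sigma,\tau^*) \leqt u(x,\sigma^*,\tau^*) \leqt u(x,\sigma^*,\tau)$. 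Taking $\sigma = \sigma^*$ in the first inequality and $\tau = \tau^*$ in the second shows that $\tau^*$ minimizes $u(x,\sigma^*,\cdot)$ and $\sigma^*$ maximizes $u(x,\cdot,\tau^*)$, i.e.\ $u(x,\sigma^*,\tau^*) = \bigvee_{\sigma \in \gstrategies(\ptr)} u(x,\sigma,\tau^*) = \bigwedge_{\tau \in \gstrategies(\pfa)} u(x,\sigma^*,\tau)$.

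From here I would obtain the first identity by a two-sided estimate. For $\leqt$: for each fixed $\sigma$ we have $\bigwedge_{\tau} u(x,\sigma,\tau) \leqt u(x,\sigma,\tau^*)$, so taking $\bigvee_\sigma$ and using the maximizer property gives $\bigvee_{\sigma}\bigwedge_{\tau} u(x,\sigma,\tau) \leqt \bigvee_\sigma u(x,\sigma,\tau^*) = u(x,\sigma^*,\tau^*)$. For $\geqt$: instantiating $\sigma=\sigma^*$ yields $\bigvee_{\sigma}\bigwedge_{\tau} u(x,\sigma,\tau) \geqt \bigwedge_{\tau} u(x,\sigma^*,\tau) = u(x,\sigma^*,\tau^*)$. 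The symmetric argument — estimate from above using $\tau = \tau^*$, and from below using the row $u(x,\sigma^*,\cdot)$ — gives $\bigwedge_{\tau}\bigvee_{\sigma} u(x,\sigma,\tau) = u(x,\sigma^*,\tau^*)$. This settles the general-strategy statement. (In fact one side of each sandwich is just the familiar maximin $\leqt$ minimax inequality; optimality is what supplies the matching bound.)

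For the positional statement the crucial observation is that the definition of optimality already ranges $\sigma$ and $\tau$ over \emph{general} strategies, so the double inequality above remains valid once $(\sigma^*,\tau^*)$ is additionally positional. Hence $\bigwedge_{\tau \in \gstrategies(\pfa)} u(x,\sigma^*,\tau) = u(x,\sigma^*,\tau^*)$ with $\sigma^* \in \pstrategies(\ptr)$, so $\sigma^*$ witnesses $\bigvee_{\sigma \in \pstrategies(\ptr)}\bigwedge_{\tau \in \gstrategies(\pfa)} u(x,\sigma,\tau) \geqt u(x,\sigma^*,\tau^*)$, while for any positional $\sigma$, $\bigwedge_{\tau \in \gstrategies(\pfa)} u(x,\sigma,\tau) \leqt u(x,\sigma,\tau^*) \leqt u(x,\sigma^*,\tau^*)$ gives the reverse inequality. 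Using that $\tau^* \in \pstrategies(\pfa)$, the companion identity $\bigwedge_{\tau \in \pstrategies(\pfa)}\bigvee_{\sigma \in \gstrategies(\ptr)} u(x,\sigma,\tau) = u(x,\sigma^*,\tau^*)$ follows symmetrically.

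I do not expect a real obstacle here: the whole proof is saddle-point bookkeeping. The one thing to watch is keeping the strategy classes straight — never silently replacing $\gstrategies$ by $\pstrategies$ in an inner quantifier, since optimality is only asserted against general opponents, and it is exactly this that lets a positional $\sigma^*$ dominate all general $\sigma$ (and dually for $\tau^*$), so that the positional suprema and infima already attain the value $u(x,\sigma^*,\tau^*)$.
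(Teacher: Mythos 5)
Your proof is correct and follows essentially the same route as the paper's: translate optimality into the pointwise saddle-point inequalities $u(x,\sigma,\tau^*) \leqt u(x,\sigma^*,\tau^*) \leqt u(x,\sigma^*,\tau)$ and then sandwich each iterated bound between the row/column extrema at $(\sigma^*,\tau^*)$, with the positional case handled by the embedding $\pstrategies \subseteq \gstrategies$ exactly as in the paper. No gaps.
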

\begin{proof}
  Assume $(\sigma^*, \tau^*)$ is an optimal pair of strategies.
  By optimality we have that for all general strategies $\sigma$ and $\tau$ that $\play{\graph}(x, \sigma^*, \tau^*) \preftr \play{\graph}(x, \sigma^*, \tau)$.
  This is the same as $u(x, \sigma^*, \tau^*) \leqt u(x, \sigma^*, \tau)$.
  Therefore,
  \begin{equation*}
  u(x,\sigma^*, \tau^*) \leqt \bigwedge_{\tau \in \gstrategies(\pfa)} u(x,\sigma^*, \tau) \leqt \bigvee_{\sigma \in \gstrategies(\ptr)} \bigwedge_{\tau \in \gstrategies(\pfa)} u(x,\sigma, \tau).
  \end{equation*}
  On the other hand, by optimality we have for all general strategies $\sigma$ that $\play{\graph}(x, \sigma, \tau^*) \preftr \play{\graph}(x, \sigma^*, \tau^*)$,
  which is the same as $u(x, \sigma, \tau^*) \leqt u(x, \sigma^*, \tau^*)$.
  Therefore, $\bigvee_{\sigma \in \gstrategies(\ptr)} u(x,\sigma, \tau^*) \leqt u(x,\sigma^*, \tau^*)$.
  We also have that $\bigwedge_{\tau \in \gstrategies(\pfa)} u(x, \sigma, \tau) \leqt u(x, \sigma, \tau^*)$;
  hence that $\bigvee_{\sigma \in \gstrategies(\ptr)} \bigwedge_{\tau \in \gstrategies(\pfa)} u(x, \sigma, \tau) \leqt \bigvee_{\sigma \in \gstrategies(\ptr)} u(x, \sigma, \tau^*) \leqt u(x, \sigma^*,\tau^*)$.
  So we proved that $u(x, \sigma^*, \tau^*) = \bigvee_{\sigma \in \gstrategies(\ptr)} \bigwedge_{\tau \in \gstrategies(\pfa)} u(x, \sigma, \tau)$.
  
  By optimality we have that $u(x, \sigma, \tau^*) \leqt u(x, \sigma^*, \tau^*)$ for all general strategies $\sigma$.
  This means that $\bigvee_{\sigma \in \gstrategies(\ptr)} u(x, \sigma, \tau^*) \leqt u(x, \sigma^*, \tau^*)$.
  We have that
  \begin{equation*}
  \bigwedge_{\tau \in \gstrategies(\pfa)} \bigvee_{\sigma \in \gstrategies(\ptr)} u(x, \sigma, \tau) \leqt \bigvee_{\sigma \in \gstrategies(\ptr)} u(x, \sigma, \tau^*) \leqt u(x, \sigma^*, \tau^*).
  \end{equation*}
  
  Again by optimality we have for all general strategies $\tau$ that $u(x, \sigma^*, \tau^*) \leqt u(x, \sigma^*, \tau)$.
  This means that $u(x,\sigma^*, \tau^*) \leqt \bigwedge_{\tau \in \gstrategies(\pfa)} u(x,\sigma^*, \tau)$.
  We also have that $u(x, \sigma^*, \tau) \leqt \bigvee_{\sigma \in \gstrategies(\ptr)} u(x, \sigma, \tau)$.
  Therefore,
  \begin{equation*}
  u(x, \sigma^*, \tau^*) \leqt \bigwedge_{\tau \in \gstrategies(\pfa)} u(x, \sigma^*, \tau) \leqt \bigwedge_{\tau \in \gstrategies(\pfa)} \bigvee_{\sigma \in \gstrategies(\ptr)} u(x, \sigma, \tau).
  \end{equation*}
  This proves that $u(x, \sigma^*, \tau^*) = \bigwedge_{\tau \in \gstrategies(\pfa)} \bigvee_{\sigma \in \gstrategies(\ptr)} u(x,\sigma, \tau)$.
  
  Now suppose $(\sigma^*, \tau^*)$ is a positional pair of optimal strategies.
  
  By optimality, we have that $\bigvee_{\sigma \in \pstrategies(\ptr)} u(x, \sigma, \tau^*) \leqt u(x,\sigma^*, \tau^*)$ because $\pstrategies(\ptr)$ embeds into $\gstrategies(\ptr)$.
  Therefore,
  \begin{equation*}
  \bigvee_{\sigma \in \pstrategies(\ptr)} \bigwedge_{\tau \in \gstrategies(\pfa)} u(x,\sigma, \tau) \leqt \bigvee_{\sigma \in \pstrategies(\ptr)} u(x,\sigma,\tau^*) \leqt u(x,\sigma^*,\tau^*).
  \end{equation*}
  
  By optimality, $u(x,\sigma^*,\tau^*) \leqt \bigwedge_{\tau \in \gstrategies(\pfa)} u(x,\sigma^*, \tau)$.
  Since $\sigma^*$ is positional, we have that
  \begin{equation*}
  u(x,\sigma^*, \tau^*) \leqt \bigwedge_{\tau \in \gstrategies(\pfa)} u(x,\sigma^*, \tau) \leqt \bigvee_{\sigma \in \pstrategies(\ptr)} \bigwedge_{\tau \in \gstrategies(\pfa)} u(x,\sigma, \tau),
  \end{equation*}
  which proves that $u(x,\sigma^*, \tau^*) = \bigvee_{\sigma \in \pstrategies(\ptr)} \bigwedge_{\tau \in \gstrategies(\pfa)} u(x,\sigma, \tau)$.
  
  By optimality, $\bigvee_{\sigma \in \gstrategies(\ptr)} u(x,\sigma,\tau^*) \leqt u(x,\sigma^*, \tau^*)$.
  Since $\tau^*$ is positional, we have
  \begin{equation*}
  \bigwedge_{\tau \in \pstrategies(\pfa)} \bigvee_{\sigma \in \gstrategies(\ptr)} u(x, \sigma, \tau) \leqt \bigvee_{\sigma \in \gstrategies(\ptr)} u(x, \sigma, \tau^*) \leqt u(x, \sigma^*, \tau^*).
  \end{equation*}
  
  Again by optimality we have that $u(x,\sigma^*, \tau^*) \leqt \bigwedge_{\tau \in \pstrategies(\pfa)} u(x,\sigma^*, \tau)$ since $\pstrategies(\pfa)$ embeds into $\gstrategies(\pfa)$.
  Therefore,
  \begin{equation*}
  u(x, \sigma^*, \tau^*) \leqt \bigwedge_{\tau \in \pstrategies(\pfa)} u(x, \sigma^*, \tau) \leqt \bigwedge_{\tau \in \pstrategies(\pfa)} \bigvee_{\sigma \in \gstrategies(\ptr)} u(x, \sigma, \tau),
  \end{equation*}
  which proves that $u(x, \sigma^*, \tau^*) = \bigwedge_{\tau \in \pstrategies(\pfa)} \bigvee_{\sigma \in \gstrategies(\ptr)} u(x, \sigma, \tau)$
\end{proof}}%

\paragraph{Relating supported values to optimal strategies}
We first focus our attention on the first claim: the relation between supported values of facts (and of their negation) and the value of optimal strategies is formalised in the following two propositions. 

\thmwithproof{prop:optimalplayissupportedvalue}{Proposition}{proposition}{
  Let $x\in \Fd$ and $\interp$ be an interpretation.
  If $(\sigma^*, \tau^*)$ is optimal, then $\suppvalue_t(x, \interp) = u(x,\sigma^*, \tau^*)$.
  If $(\sigma^*, \tau^*)$ is also positional, then $\suppvalue_g(x,\interp) = u(x,\sigma^*, \tau^*)$.
}{Follows directly by combining Proposition \ref{prop:optimalismaximin} with Theorem \ref{th:positivesupportedvalue}.}%
%
\thmwithproof{prop:negationoptimalplayissupportvalueofnegation}{Proposition}{proposition}{
%
  Let $\jf$ be a complementary justification frame and $\be$ a consistent branch evaluation.
  Let $x \in \Fd$ and $\interp$ an interpretation.
  If $(\sigma^*, \tau^*)$ is optimal, then $\suppvalue_t(\tild x,\interp) = \tild u(x,\sigma^*, \tau^*)$.
  If $(\sigma^*, \tau^*)$ is also positional, then $\suppvalue_g(\tild x, \interp) = \tild u(x, \sigma^*, \tau^*)$.
}{Follows directly by combining Proposition \ref{prop:optimalismaximin} with Theorem \ref{th:negativesupportedvalue}.}
These two propositions relate the play value of optimal pairs of (positional) strategies to the supported value of both $x$ and its negation. As such the existence of optimal strategies is a sufficient condition for the consistency property to hold, which is formalised in the next theorem.

\thmwithproof{th:optimalgivesgoodstuff}{Theorem}{theorem}{%
  Let $\jf$ be a complementary justification frame and $\be$ a consistent branch evaluation.
  If there exists an optimal pair $(\sigma^*, \tau^*)$ of strategies, then for all $x \in \Fd$ and interpretations $\interp$ it holds that $\suppvalue_t(x,\interp) = \tild \suppvalue_t(\tild x, \interp) = u(x,\sigma^*, \tau^*)$.
  If the optimal pair $(\sigma^*,\tau^*)$ is also positional, then for all $x \in \Fd$ and interpretations $\interp$ it holds that $\suppvalue_g(x,\interp) = \tild \suppvalue_g(\tild x, \interp) = u(x,\sigma^*, \tau^*)$.
}{%
  Follows directly from Propositions \ref{prop:optimalplayissupportedvalue} and \ref{prop:negationoptimalplayissupportvalueofnegation}.
}

\paragraph{Existence of optimal pairs of positional strategies}
We now turn our attention to the last piece of the puzzle, namely developing conditions under which optimal pairs of positional strategies exist. The results of the previous section already guarantee that whenever such pairs exist, the consistency property holds, and graph-like and tree-like justifications are equally powerful.
%
Therefore, having results for the existence of optimal pairs of positional strategies is paramount.
We want conditions on our branch evaluation that guarantee the existence of positional optimal pairs.
Inspired by \citeANP{hal/GimbertZ04}~\citeyear{hal/GimbertZ04,concur/GimbertZ05}, we have the following two definitions.

\begin{definition}
  $\be$ is monotone if for every two branches $\branch_1$ and $\branch_2$ starting in the same fact $x$ and all finite paths $p$ such that $p \rightarrow x$ is a path the following holds:
  \begin{equation*}
    \interp(\be(\branch_1)) \leqt \interp(\be(\branch_2)) \quad \Rightarrow \quad \interp(\be(p \rightarrow \branch_1)) \leqt \interp(\be(p \rightarrow \branch_2)).
  \end{equation*}
\end{definition}
Monotonicity means that whenever one branch is better than another branch, it does not matter how one arrived there: by adding any single path leading to the start of these two branches in front of the two branches, the inequality is respected.
Intuitively, this is a form of a locality principle: the branch evaluation can decide the value of the two extended branches ($p \rightarrow \branch_1$, $p \rightarrow \branch_2$) based on their joint start ($p$; in which case they have the same value) or based on their tail ($\branch_1$, $\branch_2$; in which case the value of the tail indicates how the value of the extended branches relates). 

In the following definition, we use some more notation and terminology.
A finite loop starting in $x$ is a finite path $p$ starting in $x$ such that $p \rightarrow x$ is a path. If $M$ is a set of loops, $M^*$ denotes the set of (finite) paths obtained by a finite iteration of loops in $M$; $M^\omega$ denotes the set of infinite paths obtained by infinite iterations of loops in $M$.

\begin{definition}
  A branch evaluation $\be$ is selective with respect to $\interp$ if for all finite paths $p$ and facts $x \in \Fd$ such that $p \rightarrow x$ is a path, for all sets $M$, $N$ of finite loops starting in $x$, for all sets $K$ of branches starting in $x$ we have that
  for all branches $\branch$ of the form $p(M \cup N)^*K$ or $p(M \cup N)^\omega$ there exist branches $\branch_*,\branch^* \in pM^\omega \cup pN^\omega \cup pK$ such that
  $\interp(\be(\branch_*)) \leqt \interp(\be(\branch)) \leqt \interp(\be(\branch^*))$.
\end{definition}
Intuitively, selectivity means that choices can be made consistently. This is most easily seen in case $M$, $N$ and $K$ are singletons in which $x$ only occurs as the start. In that case the branch $\branch$ makes (at most) three different choices for $x$: the one leading to the loop in $M$, the one leading to the loop in $N$ and the one leading to $K$. Selectivity then states that the choice for $x$ can be made consistenly, to obtain a branch that is at least as good ($\branch^*$) and one that is at least as bad ($\branch_*$).

The branch evaluation $\besub{ex}$ from Definition \ref{def:ex} is not selective.
In Example \ref{ex:niko}, if we take $M=\setl{a \rightarrow b}$, $N=\setl{a \rightarrow c}$ and $K = \emptyset$,
then every branch in $N^\omega$ and $M^\omega$ is mapped to $\Fa$, while there are branches in $(M \cup N)^\omega$ that are mapped to $\Un$,~e.g.,~$(a \rightarrow b \rightarrow a \rightarrow c \rightarrow)^\omega$.

\lemmaInAppendix{
The following has been defined in \cite{nmr/MarynissenPBD18}.
\begin{definition}
  A branch evaluation $\be$ is \emph{transitive} if $\be(x_0 \rightarrow x_1 \rightarrow x_2 \rightarrow) = \be(x_1 \rightarrow x_2 \rightarrow \cdots)$ for all branches $x_0 \rightarrow x_1 \rightarrow x_2 \rightarrow \cdots$ with at least three elements.
\end{definition}

\thmwithproof{prop:transitiveismonotone}{Proposition}{proposition}{%
  A transitive branch evaluation is monotone.
}{
  This follows immediately by noting that $\be(p \rightarrow \branch) = \be(\branch)$.
}}
On the other hand, the branch evaluations corresponding to the major logic programming semantics satisfy the two criteria proposed above. 

\thmwithproof{prop:lpismonotoneandselective}{Proposition}{proposition}{%
  The branch evaluations $\besp$, $\bekk$, $\best$, and $\bewf$ are monotone and selective.
}{
  \myparagraph{$\besp$}
  For every finite path $p$ and branch $\branch$, $\interp(\besp(p \rightarrow \branch))$ is either the second element of $p$, the first element of $\branch$, or the second element of $\branch$ depending only on the length of $p$.
  Therefore, $\besp$ is monotone.
  
  For selectivity we only need to look at the case that $p$ is an empty path.
  So take a branch $\branch \in (M \cup N)^*K \cup (M \cup N)^\omega$. Since $\besp(\branch)$ is the second element of $\branch$, the value $\interp(\besp(\branch))$ is already determined when choosing $M$, $N$ or $K$.
  This proves that $\besp$ is selective.
  
  \myparagraph{$\bekk$}
  By Proposition \ref{prop:transitiveismonotone}, it suffices to prove that $\bekk$ is selective.
  Take $\branch \in p(M \cup N)^*K \cup p(M \cup N)^\omega$.
  If $\branch$ is infinite, then $\bekk(\branch) = \Un$.
  If $K$ contains infinite branches, then for any branch $\branch^*$ in $pK$ it holds that $\interp(\bekk(\branch^*)) = \interp(\bekk(\branch))$.
  If $K$ does not contain infinite branches, then either $pM^\omega$ or $pN^\omega$ is not empty.
  Therefore, there is a branch $\branch^* \in pM^\omega \cup pN^\omega$ such that $\interp(\bekk(\branch^*)) = \interp(\bekk(\branch))$.
  
  Assume now that $\branch$ is finite.
  This means that the tail of $\branch$ is contained in $K$.
  Then by transitivity, the result follows.
  
  \myparagraph{$\bewf$}
  By Proposition \ref{prop:transitiveismonotone}, it suffices to prove that $\bewf$ is selective.
  Take $\branch \in p(M \cup N)^*K \cup p(M \cup N)^\omega$.
  If $\branch$ is finite, then a tail $\branch'$ of $\branch$ lies in $K$.
  This means that $\bewf(\branch) = \bewf(\branch') = \bewf(p \rightarrow \branch')$ by transitivity.
  So we can assume that $\branch$ is infinite.
  We can also assume by reasoning like above, that no tail of $\branch$ lies in $K$.
  This means that $\branch \in p(M \cup N)^\omega$.
  If $\branch$ has a tail of equal signs, then there is a loop in either $M$ or $N$ with the same sign.
  Therefore, there is a branch $\branch^*$ in either $pM^\omega$ or $pN^\omega$ with the same value as $\branch$.
  It rests us now the case that $\bewf(\branch) = \Un$.
  We distinguish three cases.
  \begin{enumerate}
    \item[Case 1]
    Either $M$ or $N$ contains a mixed element.
    This means that there is an element in $pM^\omega$ or in $pN^\omega$ that is mapped to $\Un$ under $\bewf$.
    \item[Case 2]
    Every element in $M$ and $N$ is completely positive or completely negative.
    Not all elements in $M$ and $N$ can be completely negative, since $\branch \in p(M \cup N)^\omega$ and $\bewf(\branch) = \Un$.
    Therefore, there is a positive loop $q^+$ and a negative loop $q^-$ in $M \cup N$.
    This means that $\Fa = \interp(\bewf(p(q^+)^\omega)) \leqt \interp(\bewf(\branch)) \leqt \interp(\bewf(p(q^-)^\omega)) = \Tr$.
  \end{enumerate}

  \myparagraph{$\best$}
  Take $\branch_1$ and $\branch_2$ two branches starting in the element $x$ such that $\interp(\best(\branch_1)) \leqt \interp(\best(\branch_2))$.
  Take a finite path $p$ such that $p \rightarrow x$ is a path.
  If $p$ is empty, then the monotonicity condition certainly holds.
  If all elements of $p$ have the same sign, then $\interp(\best(p \rightarrow \branch_1)) = x = \interp(\best(p\rightarrow \branch_2))$ if $x$ has a different sign than $p$ or $\interp(\best(p \rightarrow \branch_1)) = \interp(\best(\branch_1)) \leqt \interp(\best(\branch_2)) \leqt \interp(\best(p \rightarrow \branch_2))$ when $x$ has the same sign as $p$.
  Let $y$ be the first element of $p$ with a different sign than the first element of $p$.
  This means that $\interp(\best(p \rightarrow \branch_1)) = y = \interp(\best(p \rightarrow \branch_2))$.
  This proves that $\best$ is monotone.
  
  Take a branch $\branch \in p(M \cup N)^*K \cup p(M \cup N)^\omega$.
  
  \begin{enumerate}
    \item[Case 1] The branch $\branch$ is infinite and every element of $\branch$ have the same sign.
    Then $\branch$ has a tail in $K$, or there are loops in $M$ or $N$ with the same sign.
    Thus we have a branch $\branch^* \in pM^\omega \cup pN^\omega \cup pK$ such that $\interp(\bewf(\branch^*)) = \interp(\bewf(\branch))$.
    \item[Case 2] The branch $\branch$ is finite and every element except maybe the last has the same sign.
    This means that $\branch$ has a tail $\branch'$ in $K$ such that $\interp(\bewf(\branch)) = \interp(\bewf(p \rightarrow \branch'))$.
    It holds that $p \rightarrow \branch' \in pK$.
    \item[Case 3] The branch $\branch$ has a first sign switch before the last element.
    Assume $y$ is the first sign switch in $\branch$.
    Then either $y$ is in a loop $q_M$ in $M$, a loop $q_N$ in $N$, or in a element $q_K$ of $K$.
    In all cases, we get that $\interp(\bewf(\branch)) = \interp(y) = \interp(\bewf(p \rightarrow (q_M \rightarrow)^\omega)) = \interp(\bewf(p \rightarrow (q_N \rightarrow)^\omega)) = \interp(\bewf(p \rightarrow q_K))$.
  \end{enumerate}
}
\lemmaInAppendix{
\begin{proposition}\label{prop:existsone}
  Let $x \in \states_\ptr$ such that $|x\edges| > 1$.
  Let $x\edges = A_1 \cup A_2$ a partition of non-empty sets.
  Let $\edges_1 = \edges \setminus A_2$
  Let $\edges_2 = \edges \setminus A_1$.
  Let $\game_1$ and $\game_2$ be the subgames of $\game_{\js, \interp}$ with the same states as $\graph_\jf$ and edges $\edges_1$ respectively $\edges_2$.
  If $\be$ is monotone and selective and there are positional optimal pairs $(\sigma_i^*,\tau_i^*)$ in the games $\game_i$ for $i \in \setl{1,2}$,
  then there is an optimal pair $(\sigma^*, \tau^*)$ in $\game_{\js, \interp}$ such that $\sigma^*$ is positional.
\end{proposition}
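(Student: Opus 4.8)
The plan is to carry out the ``edge-splitting'' step of \citet{concur/GimbertZ05} in our setting. Since $|x\edges|>1$, the state $x$ has outgoing edges, so $x\in\Fd$, and since $A_1,A_2\neq\emptyset$, $x$ is a non-sink in both $\game_1$ and $\game_2$. For $i\in\setl{1,2}$ I would set $v_i(s):=u(s,\sigma^*_i,\tau^*_i)$; by optimality this is the game value of $s$ in $\game_i$ (cf.\ Proposition~\ref{prop:optimalismaximin}), and in particular $u_{\game_i}(s,\sigma,\tau^*_i)\leqt v_i(s)\leqt u_{\game_i}(s,\sigma^*_i,\tau)$ for all strategies $\sigma,\tau$, where $u_{\game'}$ denotes the utility evaluated inside a subgame $\game'$. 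The values $v_1(x),v_2(x)$ lie in $\lf$ and are therefore comparable; as the statement is symmetric under swapping $A_1$ and $A_2$, I would assume $v_2(x)\leqt v_1(x)$, put $\sigma^*:=\sigma^*_1$ (which is positional), and construct a general --- not necessarily positional, which is permitted --- strategy $\tau^*$ for $\pfa$ making $(\sigma^*,\tau^*)$ optimal.

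The easy half is that $\sigma^*_1$ guarantees at least $v_1$. Because $\states_\pfa$ and all of its outgoing edges coincide in $\game$, $\game_1$ and $\game_2$, and $\sigma^*_1$ never uses an $A_2$-edge at $x$, every $\game$-play consistent with $\sigma^*_1$ is literally a $\game_1$-play; hence, viewing any $\tau$ also as a strategy in $\game_1$, $u(s,\sigma^*,\tau)=u_{\game_1}(s,\sigma^*_1,\tau)\geqt v_1(s)$. Thus it suffices to find $\tau^*$ with $u(s,\sigma^*,\tau^*)=v_1(s)$ and $u(s,\sigma,\tau^*)\leqt v_1(s)$ for every $\ptr$-strategy $\sigma$. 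For $\tau^*$ I would use a finite-memory combination of $\tau^*_1$ and $\tau^*_2$: on a history $h$ ending in a $\pfa$-state $r$, declare the \emph{phase} of $h$ to be $2$ if the edge out of $x$ used at the last visit of $x$ in $h$ lies in $A_2$ and $1$ otherwise (so the phase is $1$ until $x$ is first left), and let $\tau^*(h):=\tau^*_{\mathrm{phase}(h)}(r)$. Along $\play{\graph}(s,\sigma^*_1,\tau^*)$ player $\ptr$ never leaves $x$ through an $A_2$-edge, so the phase is constantly $1$, $\tau^*$ behaves as $\tau^*_1$, and the play equals $\play{\graph}(s,\sigma^*_1,\tau^*_1)$; hence $u(s,\sigma^*,\tau^*)=v_1(s)$.

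The heart of the argument --- and the step I expect to be the main obstacle --- is the bound $u(s,\sigma,\tau^*)\leqt v_1(s)$ for an arbitrary, possibly history-dependent $\ptr$-strategy $\sigma$; this is exactly where monotonicity and selectivity enter. I would look at the play $p:=\play{\graph}(s,\sigma,\tau^*)$ with associated $\jf$-branch $\branch_p$ and decompose it at its occurrences of $x$: a prefix $\pi$ ending just before the first occurrence of $x$, then either infinitely many first-return loops at $x$, or finitely many such loops followed by a tail $\rho$ that starts in $x$ and never returns to it. (If $x$ never occurs on $p$, then $p$ uses no $A_2$-edge and the phase stays $1$, so $p$ is a $\game_1$-play along which $\pfa$ followed $\tau^*_1$, and $\interp(\be(\branch_p))\leqt v_1(s)$ by optimality of $\tau^*_1$ in $\game_1$.) Grouping the first-return loops by whether their initial $x$-edge is in $A_1$ or $A_2$ yields loop sets $M$ and $N$, and with $K:=\setl{\rho}$ or $K:=\emptyset$ the branch $\branch_p$ has the form $\pi(M\cup N)^*K$ or $\pi(M\cup N)^\omega$. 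Selectivity (the upper-bound half) then produces $\branch^*\in\pi M^\omega\cup\pi N^\omega\cup\pi K$ with $\interp(\be(\branch_p))\leqt\interp(\be(\branch^*))$. By the very definition of $\tau^*$, the prefix $\pi$ and every group-$1$ loop or group-$1$ tail are played under $\tau^*_1$, whereas every group-$2$ loop or group-$2$ tail is played under $\tau^*_2$; since $\tau^*_1$ and $\tau^*_2$ are positional, one may re-concatenate these segments into honest plays of $\game_1$, resp.\ $\game_2$. If $\branch^*$ stays ``in group $1$'' ($\branch^*\in\pi M^\omega$, or $\branch^*\in\pi K$ with $\rho$ a group-$1$ tail), then $\branch^*$ is the branch of a $\game_1$-play from $s$ consistent with $\tau^*_1$, so $\interp(\be(\branch^*))\leqt v_1(s)$ directly. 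If $\branch^*$ is ``in group $2$'', then the suffix of $\branch^*$ starting in $x$ is the branch of a $\game_2$-play from $x$ consistent with $\tau^*_2$, so its value is $\leqt v_2(x)\leqt v_1(x)$, and $v_1(x)=\interp(\be(\beta))$ where $\beta$ is the branch of $\play{\graph}(x,\sigma^*_1,\tau^*_1)$; monotonicity applied with the prefix $\pi$ then gives $\interp(\be(\branch^*))\leqt\interp(\be(\pi\rightarrow\beta))\leqt v_1(s)$, the last inequality because $\pi\rightarrow\beta$ is again a $\game_1$-play from $s$ consistent with $\tau^*_1$ (the prefix $\pi$ was played under $\tau^*_1$, and $\tau^*_1$ is positional). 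In all cases $u(s,\sigma,\tau^*)=\interp(\be(\branch_p))\leqt v_1(s)$, which together with the easy half gives optimality of $(\sigma^*,\tau^*)$ with $\sigma^*$ positional.

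Two places where I expect to spend care: verifying that the loop- and segment-re-concatenations above genuinely yield plays of the appropriate subgame consistent with the relevant positional $\pfa$-strategy (this is where positionality of $\tau^*_i$ is essential --- a memoryless $\pfa$ behaves the same on repeated loops), and the bookkeeping that the common prefix $\pi$, having been played under $\tau^*_1$, can only be glued in front of a group-$2$ suffix after passing through monotonicity, never directly. Both are routine once the decomposition is set up; the conceptual content sits entirely in the phase-memory strategy $\tau^*$ together with the combined use of selectivity (to replace a ``mixed'' play by a committed one) and monotonicity (to absorb the prefix).
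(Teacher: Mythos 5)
Your proposal is correct and follows essentially the same route as the paper's proof: the same WLOG ordering $u(x,\sigma_2^*,\tau_2^*)\leqt u(x,\sigma_1^*,\tau_1^*)$, the same choice $\sigma^*=\sigma_1^*$, the same last-exit ``phase'' strategy $\tau^*$ switching between $\tau_1^*$ and $\tau_2^*$, selectivity to replace a mixed play by a committed one, and monotonicity to absorb the prefix before $x$. The only differences are organizational (your uniform first-return-loop decomposition versus the paper's four-case split, and your avoiding monotonicity in the all-group-$1$ case), so there is nothing substantive to add.
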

\begin{proof}
  The proof is heavily based on the proof in \cite{hal/GimbertZ04}.
  
  Without loss of generality, we can assume that $u(x,\sigma_2^*,\tau_2^*) \leqt u(x,\sigma_1^*, \tau_1^*)$.
  We prove that $\sigma^* \coloneqq \sigma_1^*$ is optimal for $\ptr$ in $\graph$, i.e. there is a $\tau^*$ such that $(\sigma_1^*,\tau^*)$ is optimal.
  We now construct a strategy $\tau^*$ for $\pfa$ such that for all strategies $\sigma$ in $\graph$ and all vertices $y$ we have 
  \begin{equation}\label{eq:toprove}
    u(y,\sigma,\tau^*) \leqt u(y,\sigma_1^*, \tau_1^*)
  \end{equation}
  Define a mapping $b:\paths{\graph} \rightarrow \setl{1,2}$.
  Let $p \in \paths{\graph}$ be a finite path in $\graph$.
  Set $b(p) = 1$ if either $p$ does not contain any edge with the source $x$ or the last edge of $p$ with source $x$ belongs to $\graph_1$ ($A_1$).
  Set $b(p) = 2$ if the last edge of $p$ with the source $v$ belongs to $\graph_2$ ($A_2$).
  Define $\tau^*(p) = \tau_i^*(\target(p))$ if $b(p) = i$.
  
  A finite or infinite path $p$ is called homogeneous if $p$ never visits $x$, or if each edge in $p$ with source $x$ belongs to $\edges_1$, or if each edge in $p$ with source $x$ belong to $\edges_2$.
  
  The proof of \ref{eq:toprove} happens in 4 cases.
  
  \begin{enumerate}
    \item[Case 1] $y = x$ and the play $\play{\graph}(y,\sigma, \tau^*)$ is of the form $p_0p_1\ldots p_nq$, where $p_i$ are finite non empty homogeneous paths with source $x$ and $q$ is a homogeneous play with source $x$.
    
    Since $p$ is consistent with $\tau^*$ and $p_i$ are homogeneous, each play $p_i^\omega$ is either consistent with $\tau_1^*$ (if $p_i$ contains only edges of $\graph_1$) or with $\tau_2^*$ (if $p_i$ contains only edges of $\graph_2$).
    Assume without loss of generality that $p_i^\omega$ is consistent with $\tau_2^*$.
    This means there is a $\sigma'$ such that $p_i^\omega = \play{\graph_1}(x,\sigma',\tau_2^*)$; hence $u(p_i^\omega) = u(x,\sigma',\tau_2^*)$.
    By optimality, we get that $u(p_i^\omega) \leqt u(x,\sigma_2^*,\tau_2^*)$.
    By assumption we have that $u(p_i^\omega) \leqt u(x,\sigma_2^*,\tau_2^*) \leqt u(x,\sigma_1^*,\tau_1^*)$.
    
    In the case that $p_i^\omega$ is consistent with $\tau_1^*$, we also get that $u(p_i^\omega) \leqt u(x,\sigma_1^*,\tau_1^*)$.
    Similarly, $u(q) \leqt u(x,\sigma_1^*, \tau_1^*)$.
    By selectivity, we have that $u(x,\sigma, \tau^*)$ is smaller or equal to some $u(p_i^\omega)$ or $u(q)$.
    Then by the results above we have that $u(x,\sigma,\tau^*) \leqt u(x,\sigma_1^*,\tau_1^*)$.
    
    \item[Case 2] $y = x$ and the play $\play{\graph}(y,\sigma, \tau^*)$ is of the form $p_0p_1p_2\ldots$, where $p_i$ is homogeneous non-empty path with source $x$ and $p_ip_{i+1}$ is not homogeneous.
    
    Let $M$ be the set of plays $p_{i_1}p_{i_2}\ldots$ such that $i_1, i_2, \ldots$ are even.
    Let $N$ be the set of plays $p_{i_1}p_{i_2}\ldots$ such that $i_1, i_2, \ldots$ are odd.
    All plays in $M$ are homogeneous and are either all consistent with $\tau_1^*$ or all consistent with $\tau_2^*$.
    If all plays in $M$ are all consistent with $\tau_i^*$, then all plays in $N$ are all consistent with $\tau_{3-i}^*$.
    The plays in $N$ are all homogeneous.
    
    Assume without loss of generality that $M$ is consistent with $\tau_2^*$.
    By optimality of $\tau_2^*$, we have that $u(q) \leqt u(x,\sigma_2^*,\tau_2^*) \leqt u(x,\sigma_1^*,\tau_1^*)$.
    By optimality of $\tau_1^*$, we have that $u(q) \leqt u(x,\sigma_1^*,\tau_1^*)$.
    By selectivity, we know that $u(x,\sigma,\tau^*)$ is smaller than or equal to $u(q)$ for some $q \in M \cup N$.
    Therefore, $u(x,\sigma,\tau^*) \leqt u(x,\sigma_1^*,\tau_1^*)$.
    
    \item[Case 3] $y \neq x$ and the play $p \coloneqq \play{\graph}(y,\sigma,\tau^*)$ never passes through $x$.
    Let $\sigma_1$ be the restriction of $\sigma$ to $\graph_1$.
    Then $p$ is consistent with $\sigma_1$ and $\tau_1^*$ since $b(q) = 1$ for any initial segment of $p$.
    By optimality of $\tau_1^*$ we get $u(p) =u(y,\sigma_1,\tau_1^*) \leqt u(y,\sigma_1^*, \tau_1^*)$.
    
    \item[Case 4] $y \neq x$ and the play $p \coloneqq \play{\graph}(y,\sigma,\tau^*)$ passes at least once the vertex $x$.
    We can factorise $p$ as $rq$ where $r$ is the initial segment of $p$ that does not contain $x$ and the source of $q$ is $x$.
    Let $q^*$ be the play $\play{\graph_1}(x,\sigma_1^*,\tau_1^*)$.
    Of course $u(q^*) = u(x,\sigma_1^*,\tau_1^*)$.
    The play $q$ is consistent with $\tau^*$ with source $x$, and thus by applying case 1 or case 2, we know that $u(q) \leqt u(x,\sigma_1^*,\tau_1^*)$.
    This means that $u(q) \leqt u(q^*)$.
    By monotonicity, we get that $u(p) = u(rq) \leqt u(rq^*)$.
    
    The play $rq^*$ starts in $y$ and is consistent with $\tau_1^*$.
    By optimality of $\tau_1^*$ we get that $u(rq^*) \leqt u(y,\sigma_1^*, \tau_1^*)$.
    This proves that $u(p) \leqt u(y,\sigma_1^*, \tau_1^*)$.
  \end{enumerate}
  
  So we have proven that $u(y,\sigma,\tau^*) \leqt u(y,\sigma_1^*, \tau_1^*)$ for all $\sigma$.
  The play $\play{\graph}(y,\sigma^*,\tau^*)$ does not have a transition in $A_2$ and thus is in $\graph_1$; hence $\play{\graph}(y,\sigma^*,\tau^*) = \play{\graph}(y,\sigma_1^*, \tau_1^*)$.
  So we have proven that
  $u(y,\sigma,\tau^*) \leqt u(y,\sigma^*, \tau^*)$ for all $\sigma$.
  This is the left-hand side of the optimality equation.
  
  Take a general strategy $\tau$.
  Let $\tau_1$ be the restriction of $\tau$ to $\graph_1$.
  Since the play $\play{\graph}(y,\sigma^*, \tau)$ does not have a transition in $A_2$ since $\sigma^*=\sigma_1^*$, we have that this play is in $\graph_1$.
  Therefore, $u(y,\sigma^*, \tau) = u(y,\sigma_1^*, \tau_1)$.
  Now, by optimality of $(\sigma_1^*,\tau_1^*)$ we have that $u(y,\sigma_1^*, \tau_1^*) \leqt u(y,\sigma_1^*, \tau_1)$,
  which shows that
  $u(y,\sigma^*, \tau^*) \leqt u(y,\sigma^*, \tau)$.
  This concludes the proof that $(\sigma^*, \tau^*)$ is an optimal pair of strategies, where $\sigma^*$ is positional. 
\end{proof}
Completely analogous to Proposition \ref{prop:existsone}
\begin{proposition}\label{prop:existstwo}
  Let $x \in \states_\pfa$ such that $|x\edges| > 1$.
  Let $x\edges = A_1 \cup A_2$ a partition of non-empty sets.
  Let $\edges_1 = \edges \setminus A_2$
  Let $\edges_2 = \edges \setminus A_1$.
  Let $\graph_1=(\states,\states_\ptr,\states_\pfa,\edges_1)$ and $\graph_2=(\states,\states_\ptr,\states_\pfa,\edges_2)$ be two subgraphs of $\graph$.
  If there are positional optimal pairs $(\sigma_i^*,\tau_i^*)$ in the games $\graph_i$ for $i \in \setl{1,2}$,
  then there is an optimal pair $(\sigma^*, \tau^*)$ in $\graph$ such that $\tau^*$ is positional.
  
  Let $x \in \states_\pfa$ such that $|x\edges| > 1$.
  Let $x\edges = A_1 \cup A_2$ a partition of non-empty sets.
  Let $\edges_1 = \edges \setminus A_2$
  Let $\edges_2 = \edges \setminus A_1$.
  Let $\game_1$ and $\game_2$ be the subgames of $\game_{\js, \interp}$ with the same states as $\graph_\jf$ and edges $\edges_1$ respectively $\edges_2$.
  If $\be$ is monotone and selective and there are positional optimal pairs $(\sigma_i^*,\tau_i^*)$ in the games $\game_i$ for $i \in \setl{1,2}$,
  then there is an optimal pair $(\sigma^*, \tau^*)$ in $\game_{\js, \interp}$ such that $\tau^*$ is positional.
\end{proposition}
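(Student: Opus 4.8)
The plan is to obtain Proposition~\ref{prop:existstwo} from Proposition~\ref{prop:existsone} by a player-swapping duality: constructing a positional optimal strategy for $\pfa$ is, after exchanging the roles of the two players, literally the task of constructing a positional optimal strategy for $\ptr$.

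Concretely, I would first introduce the \emph{dual game} $\game'$: it has the same game graph as $\graph_\jf$ except that the sets $\states_\ptr$ and $\states_\pfa$ are interchanged, and its preference relation is $\preffa$, the inverse of $\preftr$. Unwinding the definitions, a (positional) strategy for the maximizing player of $\game'$ is exactly a (positional) strategy $\tau$ for $\pfa$ in $\game_{\js,\interp}$; the plays of $\game'$ and of $\game_{\js,\interp}$ coincide, since the underlying graph is unchanged; and a pair $(\tau,\sigma)$ is optimal in $\game'$ if and only if $(\sigma,\tau)$ is optimal in $\game_{\js,\interp}$, because inverting $\preftr$ simply swaps the two inequalities in the definition of an optimal pair. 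The same remarks apply to the subgames $\game_1$ and $\game_2$ and their duals, so the hypothesised positional optimal pairs $(\sigma_i^*,\tau_i^*)$ of $\game_i$ are positional optimal pairs of the corresponding subgames of $\game'$.

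Second, I would record the one point that needs a moment's thought: the properties of $\be$ invoked in the proof of Proposition~\ref{prop:existsone}, namely monotonicity and selectivity, are insensitive to reversing $\leqt$. Reversing $\leqt$ to $\geqt$ in the defining implication of monotonicity only renames $\branch_1 \leftrightarrow \branch_2$, and doing the same in the definition of selectivity only renames $\branch_* \leftrightarrow \branch^*$; in both cases one recovers the original statement. Hence $\be$ is monotone and selective for the preference order of $\game'$ exactly when it is for that of $\game_{\js,\interp}$.

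Finally, I would observe that the proof of Proposition~\ref{prop:existsone} uses nothing about $\game_{\js,\interp}$ beyond the abstract game-graph structure, the monotonicity and selectivity of $\be$, and the optimality of the two given positional sub-pairs; it never appeals to any justification-specific lemma of this section. It therefore applies verbatim to $\game'$, with the vertex $x \in \states_\pfa$ (which is a vertex of the maximizing player of $\game'$ with $|x\edges|>1$) and the partition $x\edges = A_1 \cup A_2$, and it produces an optimal pair of $\game'$ whose maximizing-player component is positional. Translating this pair back through the duality yields an optimal pair $(\sigma^*,\tau^*)$ in $\game_{\js,\interp}$ with $\tau^*$ positional, as required. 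The main obstacle is essentially the bookkeeping of this player swap: one must check carefully that it preserves optimality and positionality in the way claimed, and that monotonicity and selectivity really are self-dual so that Proposition~\ref{prop:existsone} applies to $\game'$. Once these routine verifications are in place the conclusion is immediate; equivalently, one may simply re-read the four-case argument of Proposition~\ref{prop:existsone} with the roles of $\ptr$ and $\pfa$, of $\sigma$ and $\tau$, and of $\leqt$ and $\geqt$ interchanged throughout.
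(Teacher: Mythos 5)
Your proposal is correct and matches the paper's intent: the paper offers no separate argument for this proposition, simply declaring it ``completely analogous'' to Proposition~\ref{prop:existsone}, and your player-swapping duality (inverting $\preftr$, exchanging $\states_\ptr$ and $\states_\pfa$, and re-applying the four-case argument) is precisely a rigorous packaging of that analogy. Your explicit check that monotonicity and selectivity are self-dual under reversing $\leqt$ is the one point where the analogy could conceivably fail, and verifying it is exactly the right thing to do.
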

}
If every state owned by $\ptr$ has at most one outgoing transition, then there is only one strategy $\sigma$ for $\ptr$.
This strategy is positional and there is a general strategy $\tau$ for $\pfa$ such that $(\sigma,\tau)$ is optimal.
So we can assume there is a state $x \in \states_\ptr$ with more than one outgoing transition.
If we take a partition $\setl{A_1, A_2}$ of non-empty sets of $x\edges$, then we can form the game graphs $\graph_1$ and $\graph_2$ with the states of the original graph and $\edges \setminus A_2$ respectively $\edges \setminus A_1$ for the edges.
Using the preference relation of the original game, we get two smaller games $\game_1$ and $\game_2$.
Assume by induction that $\game_i$ has an optimal pair $(\sigma_i^*,\tau_i^*)$ of positional strategies for each $i\in\setl{1,2}$.
Without loss of generality we have that $u(x,\sigma_2^*,\tau_2^*) \leqt u(x,\sigma_1^*,\tau_1^*)$.
It turns out that if $\be$ is monotone and selective, then $\sigma_1^*$ is also an optimal strategy for the original game, meaning that there is a general strategy $\tau$ for $\pfa$ such that $(\sigma_1^*,\tau)$ is an optimal pair of strategies.
Therefore, if the game is finite, we can perform finite induction to obtain that there exists a positional strategy $\sigma^*$ for $\ptr$ and a general strategy $\tau$ for $\pfa$ such that $(\sigma^*,\tau)$ is optimal.
By a similar reasoning, there is a positional strategy $\tau^*$ for $\pfa$ and a general strategy $\sigma$ for $\ptr$ such that $(\sigma, \tau^*)$ is optimal.
This means that $(\sigma^*,\tau^*)$ is also an optimal pair, and both strategies are positional.
This paragraph essentially sketched the proof of the following theorem.
 
\thmwithproof{thm:main}{Theorem}{theorem}{%
  If $\game_{\js, \interp}$ is finite and $\be$ is monotone and selective, then there is a positional optimal pair.
}{
  We prove this by induction on the number edges.
  As base case we have games where every state has at most one outgoing edge.
  Because then every pair of strategies for $\ptr$ and $\pfa$ is optimal and positional.
  So assume now by induction that every subgame with fewer edges has an optimal pair of positional strategies.
  If every state owned by $\ptr$ has at most one outgoing edge, then there is only one strategy $\sigma^*$ for $\ptr$.
  This strategy is positional and there exists a general strategy $\tau$ such that $(\sigma^*,\tau)$ is optimal.
  If there is a state $x$ owned by $\ptr$ such that $|x\edges| > 1$, then by using the induction assumption and Proposition \ref{prop:existsone} there exist a positional strategy $\sigma^*$ and a general strategy $\tau$ such that $(\sigma^*,\tau)$ is optimal.
  
  A similar reasoning can be done using Proposition \ref{prop:existstwo}, to get a general strategy $\sigma$ and a positional strategy $\tau^*$ such that $(\sigma, \tau^*)$ is optimal.
  Therefore, also $(\sigma^*,\tau^*)$ is optimal, which concludes the proof.%
}
\begin{corollary}
  If $\js$ is finite and $\be$ is monotone and selective, then $\suppvalue_g(\tild x,\interp) = \tild \suppvalue_g(x,\interp)$ for all $x \in \Fd$ and interpretations $\interp$.
\end{corollary}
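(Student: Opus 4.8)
The plan is to simply chain Theorem~\ref{thm:main} with Theorem~\ref{th:optimalgivesgoodstuff}; no new mathematical content is needed beyond checking that the hypotheses line up.

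First I would observe that finiteness of $\js$ transfers to the induced game. Unfolding the construction, the game graph $\graph_\jf$ has state set $\F \cup \setprop{r_{x \gets A}}{x \in \Fd,\ x \gets A \in \rules}$ and its edges are exactly those prescribed by the rules of $\jf$; hence if $\F$ and $\rules$ are finite then so is $\graph_\jf$, and therefore so is $\game_{\js,\interp}$ for every interpretation $\interp$ of $\F$ (the preference relation does not affect finiteness of the underlying graph). Next, since $\game_{\js,\interp}$ is finite and $\be$ is monotone and selective, Theorem~\ref{thm:main} supplies a \emph{positional} optimal pair $(\sigma^*, \tau^*)$ in $\game_{\js,\interp}$.

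Finally I would invoke the positional part of Theorem~\ref{th:optimalgivesgoodstuff}: under its standing assumptions ($\jf$ complementary and $\be$ consistent — the usual setting, satisfied in all practical applications and in particular by $\besp$, $\bekk$, $\best$, $\bewf$ via Proposition~\ref{prop:lpismonotoneandselective}), the existence of the positional optimal pair $(\sigma^*,\tau^*)$ yields $\suppvalue_g(y,\interp) = \tild \suppvalue_g(\tild y, \interp) = u(y,\sigma^*,\tau^*)$ for every $y \in \Fd$ and every interpretation $\interp$. Instantiating this at $y := \tild x$ and using $\tild \tild x = x$ gives $\suppvalue_g(\tild x,\interp) = \tild \suppvalue_g(x,\interp)$ for all $x \in \Fd$ and all $\interp$, which is exactly the claim.

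The only real subtlety is bookkeeping of hypotheses rather than a genuine obstacle: the corollary as stated lists only finiteness of $\js$ and monotonicity/selectivity of $\be$, whereas Theorem~\ref{th:optimalgivesgoodstuff} additionally needs complementarity of $\jf$ and consistency of $\be$. I would make these standing assumptions explicit (they hold in every intended application, and the branch evaluations of interest are consistent), after which the argument above goes through verbatim.
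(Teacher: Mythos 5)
Your proof is correct and matches the paper's intended argument exactly: the corollary is stated without an explicit proof precisely because it is the immediate chaining of Theorem~\ref{thm:main} (finiteness plus monotonicity and selectivity give a positional optimal pair) with Theorem~\ref{th:optimalgivesgoodstuff} that you describe. Your observation that complementarity of $\jf$ and consistency of $\be$ are additionally needed for Theorem~\ref{th:optimalgivesgoodstuff} to apply, and should be made explicit as standing hypotheses, is a correct and worthwhile catch rather than a gap in your argument.
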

For the major logic programming semantics, this means that for finite logic programs, the supported value is consistent and that graph-like and tree-like justifications are equally powerful.
\section{Conclusion}

In this paper, we studied the relation between justification theory and game theory to answer two concrete questions about justification systems: when are graph-like and tree-like justifications equally powerful, and when does the supported value commute with negation? 
All of our work was developed in the general setting except for one (crucial) result: in order to prove the consistency property for graph-like justification systems in case of monotonocity and selectivity,  we assumed, following game theory tradition, finiteness. 
This means for instance that while our results can be applied directly to finite ground logic programs, they are not directly applicable to non-ground programs with an infinite grounding. 

Hence, the most obvious direction for future work is generalizing this theorem to the infinite case, thereby possibly extending the definition of selectivity. 
For inspiration to achieve such result, we think that game theory can again provide valuable directions:
there have been result in a different but similar type of games, the Gale-Stewart games \cite{Soare2016}.
In these games, two player alternates picking an element from a fixed set $A$.
A play is a countable infinite sequence of elements in $A$.
A Gale-Stewart game is won by the first player if the infinite sequence is in some fixed payoff set, otherwise the second player wins.
A famous result by \citeNS{am/Martin75} states that if the pay-off set of a Gale-Stewart game is Borel, then the game is determined, meaning that one of the two players has a winning strategy.
In contrast with our justification games, Gale-Stewart games are only two-valued and every play is infinite.
Three-valued antagonistic games over graphs can be split into two two-valued antagonistic games over graphs.
If these two-valued games can be mapped to Gale-Stewart games such that winning strategies match, Martin's Borel determinacy theorem could be used to prove the existence of optimal pairs of general strategies.

Several earlier papers have already established connections between logic programming and games to study properties of logic programs \cite{jlp/Emden86,lpnmr/Blair95,lpar/LoddoC00,apal/GalanakiRW08}.
By establishing a bridge between game theory and justification theory, we developed a mechanism to transfer game-theoretic results to all application domains of justification theory, e.g., to logic programming, abstract argumentation \mycite{AF}, and to nested fixpoint definitions \cite{lpnmr/DeneckerBS15}. 
Furthermore, by using justification theory, the translation immediately works for all common semantics of logic programs. Indeed, the branch evaluation, which determines the semantics, is only used for determining the resulting preference relation.
While we used our results here to get results on consistency and coincidence of graph-like and tree-like justifications, the full impact of this connection still remains to be explored.

\bibliographystyle{acmtrans}
\bibliography{krrlib.bib}

\label{lastpage}

\arxivpaper{%
\appendix
\section{Proofs}
\proofs
}

\end{document}